\def\BibTeX{{\rm B\kern-.05em{\sc i\kern-.025em b}\kern-.08em
    T\kern-.1667em\lower.7ex\hbox{E}\kern-.125emX}}
\newtheorem{theorem}{Theorem}
\definecolor{thupurple}{RGB}{0,0,0}
\newlength{\spacelength}
\title{\LARGE \bf
A Hierarchical Control Framework for Drift Maneuvering of Autonomous Vehicles
% Hierarchical Adaptive Drift Maneuver Control?
}
\author{Bo Yang, Yiwen Lu, Xu Yang and Yilin Mo$^{1}$% <-this % stops a space
\thanks{*This work is supported by the National Key Research and Development Program of China under Grant 2018AAA0101601}% <-this % stops a space
\thanks{$^{1}$The authors are with Department of Automation and BNRist, Tsinghua University, Beijing, China
        {\tt\small \{yang-b21,luyw20,yangx21\}@mails.tsinghua.edu.cn, ylmo@tsinghua.edu.cn}}%
}
\begin{document}

\maketitle
\thispagestyle{empty}
\pagestyle{empty}

%%%%%%%%%%%%%%%%%%%%%%%%%%%%%%%%%%%%%%%%%%%%%%%%%%%%%%%%%%%%%%%%%%%%%%%%%%%%%%%%
\begin{abstract}

Maneuvering an autonomous vehicle under drift condition is critical to the safety of autonomous vehicles when there is a sudden loss of traction due to external conditions such as rain or snow, which is a challenging control problem due to the presence of significant sideslip and nearly full saturation of the tires.
In this paper, we focus on the control of drift maneuvers of autonomous vehicle to track circular paths with either fixed or moving centers, subject to change in the tire-ground interaction, which are common training tasks for Radio Control~(RC) car drifting enthusiasts and can therefore be used as benchmarks of the performance of drift control.
In order to achieve the above tasks, we propose a hierarchical control architecture which decouples the curvature and center control of the trajectory.
In particular, an outer control loop is proposed to stabilize the center by tuning the target curvature, and an inner control loop tracks the curvature using a feedforward/feedback controller enhanced by an $\mathcal{L}_1$ adaptive component.
The hierarchical architecture is flexible because the inner loop is task-agnostic and adaptive to changes in tire-ground interaction, which allows the outer loop to be designed independent of low-level dynamics, opening up the possibility of incorporating sophisticated planning algorithms.
We implement our control strategy on a simulation platform as well as on a 1/10 scale RC car, and both the simulation and experiment results illustrate the effectiveness of our strategy in achieving the above described set of drift maneuvering tasks.

\end{abstract}

%%%%%%%%%%%%%%%%%%%%%%%%%%%%%%%%%%%%%%%%%%%%%%%%%%%%%%%%%%%%%%%%%%%%%%%%%%%%%%%%
\section{Introduction}

In recent years, increasing attention from academia and industry has been drawn to autonomous driving research~\cite{paden2016survey}, which includes various sub-areas such as environment perception~\cite{perception1,perception2,perception3}, motion planning~\cite{planning1,planning2,planning3}, motion control~\cite{control1,control2,control3} and etc. In this paper, we focus on drift control, an extreme instance of motion control where significant slip exists between the tires and the ground. Drift control is pertinent to the safety of autonomous driving because significant slip may occur unexpectedly due to external conditions like rain or snow and cause a sudden loss of traction, where the precise control of the vehicle trajectory is required to avoid accidents. More specifically, we focus on \emph{drift maneuvering}, which involves manipulating the vehicle to track certain trajectories in sustained drift, and can therefore be viewed as a benchmark of drift control performance.

Drift maneuvering is challenging due to tire saturation and limited control authority in a highly unstable region~\cite{zhang2018drift}. As a result, in the existing literature it is usually tackled on a per-task or per-vehicle basis in a controlled environment. For example, Goh et al.~\cite{goh2016simultaneous} reformulate a fixed circular trajectory as a series of vehicle-dependent drift equilibrium points, based on which they propose tracking controllers to minimize the sideslip error and lookahead error, and Goh et al.~\cite{goh2020toward} extend the above work to tracking arbitrary pre-defined trajectories by casting the vehicle dynamics into a trajectory-dependent curvilinear coordinate system, in which they apply nonlinear model inversion to minimize the tracking error. Both works rely heavily on accurate modeling of the vehicle dynamics and parameterization of the reference trajectory, which is very demanding if the methods are to be deployed to different vehicles, or even the same vehicle whose model inevitably varies due to load changes or component wears, and applied to perform generic drift maneuvering tasks. Culter et al.~\cite{cutler2016autonomous}, on the contrary, adopt the data-driven approach by applying an reinforcement learning algorithm called Probability Inference for Learning COntrol~(PILCO), but the method is only considered in a single-task setting, where the objective is to minimize tracking error of a particular drift equilibrium. Like other reinforcement learning algorithms~\cite{dulac2021challenges}, ad-hoc reward shaping and a significant amount of training data may be required for the method to generalize across different drift maneuvering tasks. Apart from the model-based and data-driven methods, another class of drift maneuvering methods are aided by expert experience, including Jelavic et al.~\cite{Jelavic2017AutonomousDP} and Zhang et al.~\cite{ZHANG20171916,zhang2018drift}, which apply either a feedforward/feedback controller or an open-loop controller to track an expert trajectory in some phase of the drifting process. A drawback of the expert-aided methods is that human experience has limited coverage of different drifting conditions, and as a result, those methods are usually designed to track a particular trajectory. Moreover, none of the above works consider the sudden change of tire-ground interaction, a practical situation that may occur due to external conditions like rain or snow.

In this paper, we propose a drift control framework that can perform various maneuvering tasks and adapt to sudden changes of tire-ground interaction. The flexibility of our framework stems from the observation that a generic drift trajectory can be viewed as an arc with continuously varying center and curvature, based on which we propose a hierarchical architecture that decouples the center and curvature control of the trajectory. In particular, an outer loop stabilizes the center by tuning the target curvature, and an inner loop tracks the curvature using a feedforward/feedback controller enhanced by an $\mathcal{L}_1$ adaptive control~\cite{michini2009l1} component. Through an adaptive feedforward and reference signal design, the inner loop can deliver a consistent curvature tracking performance over different tire-ground interaction, allowing the outer loop to focus on maneuver navigation, such that it can achieve various tasks without knowing the internal details of the vehicle dynamics.

A key contribution of our control architecture described above is the introduction of \emph{curvature inference and feedback}, which bridge the low-level vehicle stabilization and the high-level navigation. In contrast to the aforementioned previous works that are built upon state feedback, our proposed controller makes decisions based on both the vehicle state and the trajectory curvature. The reason behind this design are twofold: On one hand, as a quantity derived from historical trajectory in a small time window, curvature can encodes higher-level information about the drifting state comparing to other quantities computed directly from the current state, e.g. sideslip angle. Hence, observed from the experiments on a RC car platform, the inner control loop can be designed with better performance. On the other hand, the curvature information can be leveraged by the outer control loop for planning desired trajectories, which in this paper include circular motions orbiting fixed or moving centers.

In this paper, we illustrate the effectiveness of our control strategy on the circular drift maneuvers, subject to changes in the tire-ground interaction, which are common training tasks for drifting enthusiasts~\cite{abdulrahim2006dynamics}. We also believe our proposed hierarchical control architecture can potentially incorporate more sophisticated planning algorithms, e.g., the one proposed by Levin et al.~\cite{levin2019agile}, which plans trajectories by concatenating motion primitives. Conceptually complex trajectory can be approximated in terms of a sequence of arcs with fixed curvature values (motion primitives), and such sequences, common in the aforementioned drift training tasks, have been shown to be accurately tracked by our proposed controller in both simulation and hardware experiments. 

The rest of the manuscript is organized as follows: Section~\ref{sec:model} introduces the modeling of our RC car platform and defines drift maneuvering tasks, Section~\ref{sec:method} gives an overview of our proposed control architecture and describes its components in detail, Section~\ref{sec:result} presents the result of our control strategy on a simulation platform as well as on our RC car, and finally, Section~\ref{sec:conclusion} summarizes this paper and gives remarks on future research directions.

\vspace{\spacelength}
\section{Problem Formulation}\label{sec:model}

The target platform of our control design is a 1/10 scale RC car~shown in Fig.~\ref{fig:racecar}, which is similar to the ones used in MIT Racecar~\cite{racecar} and BARC~\cite{barc} projects. The vehicle adopts an Ackermann steering geometry, and is four-wheel driven in the sense that all wheels have the same rotational speed.
\begin{figure}[!htbp]
    \centering
    \includegraphics[width=0.618\columnwidth]{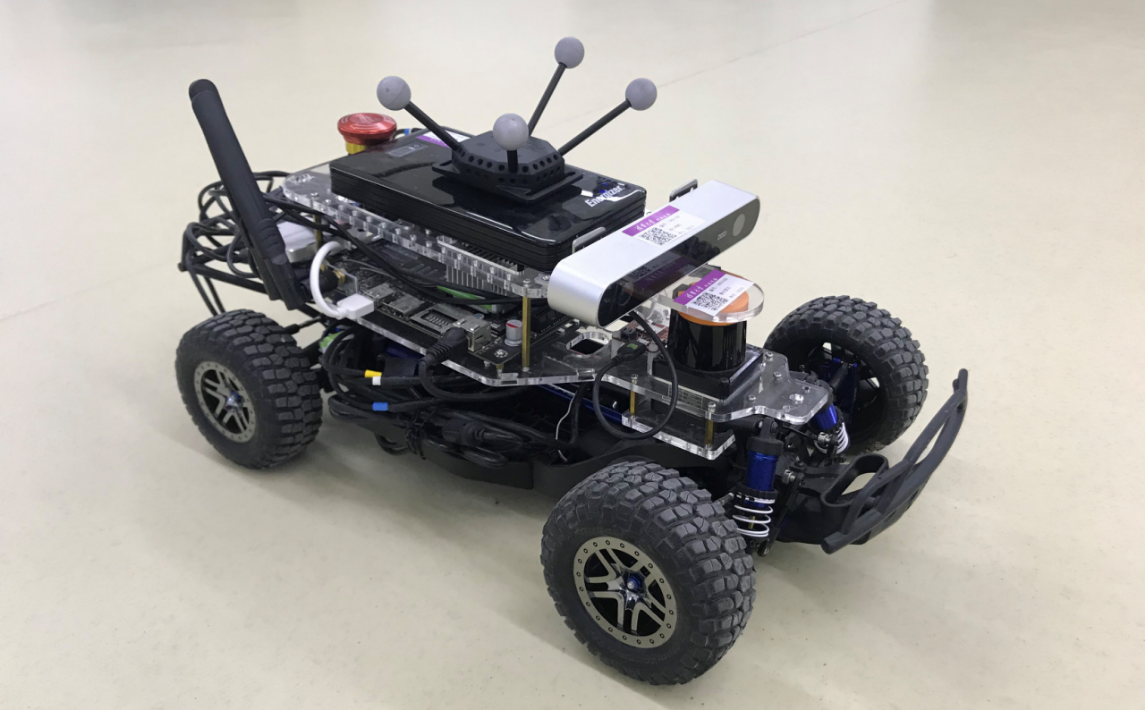}
    \caption{Our RC car platform}
    \label{fig:racecar}
\end{figure}

For controller design and simulation purposes, we abstract the above shown RC car into a bicycle model with sideslip, illustrated in Fig.~\ref{fig:bike_model}. It is worth noticing that we ignore the difference between left and right set of wheels, which is usually insignificant in drift maneuvering~\cite{zhang2018drift}. On the other hand, for a relatively high-fidelity characterization of the tire slip, which is an essential feature of drift, we model the tire-ground interaction using the wide-adopted Pacejka Magic Formula~\cite{pacejka1997magic}. The above mixed-fidelity vehicle modeling for aggressive maneuvering has been reported and proved effective in~\cite{Jeon2011a,lu2021twotimescale}. 

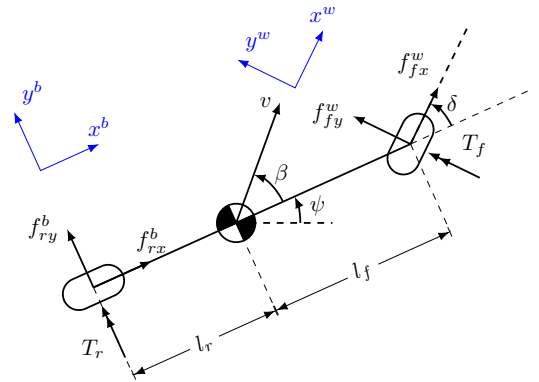
\begin{figure}[!htbp]
    \centering
    \resizebox{\columnwidth}{!}{
        % \hspace{-15cm}
       \usetikzlibrary{calc,angles,quotes,arrows.meta}

\tikzset{pill/.style={minimum width=1cm,minimum height=5mm,rounded corners=2.5mm,draw},
reactor/.style={circle,draw,minimum size=6mm,path picture={
\draw (-3mm,0) -- (3mm,0) (0,-3mm) -- (0,3mm);
\fill (0,0) -- (3mm,0) arc(0:-90:3mm) -- cycle;
\fill (0,0) -- (-3mm,0) arc(180:90:3mm) -- cycle;
}}}

\begin{tikzpicture}
\coordinate (O) (0, 0);
% \draw[blue,{Latex}-{Latex}] (0,-1) node [right] (Yi) {$y^i$} -- (0,-2) coordinate (O)-- (1,-2)
% node [above] (Xi) {$x^i$};
\path (O) -- (9.5,4.3) coordinate[pos=0.28] (F1) coordinate[pos=0.8] (F2) coordinate (TR);
% \draw[thick,dotted] (O |- F2) node[left]{$x$} -- (F2) -- (O -| F2) node[below] {$y$};
\draw[thick] (F1)  -- (F2) node[pos=0.45,sloped,reactor] (M){~}
node[pos=0,sloped,pill]{};
\draw[dashed] (F2) -- (TR);
\draw[thick,-latex] (F1) -- ($(F1)!1cm!0:(F2)$)
node[above]{$f_{rx}^b$};
\draw[thick,-latex] (F1) -- ($(F1)!1cm!90:(F2)$)
node[left]{$f_{ry}^b$};
\draw[thick,dashed] (F2) -- ++ (64:2) coordinate(H)  node[pos=0,sloped,pill,solid]{}
pic ["$\delta$",draw,solid,-latex,angle radius=0.7cm,angle eccentricity=1.3] {angle = TR--F2--H};
\draw[thick,-latex] (F2) -- ($(F2)!1cm!0:(H)$)
node[above left]{$f_{fx}^w$};
\draw[thick,-latex] (F2) -- ($(F2)!1cm!90:(H)$)
node[left]{$f_{fy}^w$};
\path (F2) -- ($(F2)!1.5cm!270:(H)$) coordinate[pos=0.2] (F21) coordinate[pos=0.8] (F22) ;
% \draw[thick,red,-latex] (F2) -- ++ (62:2) coordinate (A2) node{$\alpha_f$}
% pic [draw,solid,black,->,angle radius=1.3cm] {angle = H--F2--A2};;
% \draw[thick,red,-latex] (F1) -- ++ (48:2) coordinate (A1)
% pic ["$\alpha_r$",draw,->,red,thick,angle radius=1cm,angle eccentricity=1.3] {angle = F2--F1--A1};
% \draw[->] let \p1=($(F2)-(F1)$),\n1={-180+atan2(\y1,\x1)},\n2={\n1+180} in 
% ($($(M)!8mm!00:(F1)$)+({cos(\n1+90)*1mm},{sin(\n1+90)*1mm})$) arc(\n1:\n2:8mm)
% node[midway,below,red]{$\omega_z$};
% \draw[latex-] ($(F1)!1mm!-90:(M)$) -- ($(F1)!5mm!-90:(M)$) node[below]{$F_{\gamma r}$};
% \draw[latex-] ($(F2)!1mm!90:(M)$) -- ($(F2)!5mm!90:(M)$) node[below]{$F_{\gamma f}$};
\draw[{Bar}{Latex}-{Latex}{Bar}] ($(F1)!1.5cm!270:(M)$) coordinate (l1) -- 
($(M)!1.5cm!270:(F2)$) coordinate (l2) node[midway,sloped,fill=white]{$l_r$};
\draw[{Bar}{Latex}-{Latex}{Bar}] (l2) -- 
($(F2)!1.5cm!90:(M)$) coordinate (l3) node[midway,sloped,fill=white]{$l_f$};
\draw[dashed] (F1) -- (l1) coordinate[pos=0.2] (F11) coordinate[pos=0.8] (F12) ;
\draw[dashed] (M.center) -- (l2);
\draw[dashed] (F2) -- (l3) ;
\draw[thick,->>,>=latex] (F12) -- (F11) node[midway,below left] {$T_r$};
\draw[thick,->>,>=latex] (F22) -- (F21) node[midway,above right] {$T_f$};

\draw[thick,dashed] (M.center) -- ++(0:1.5) coordinate (M1) node{}
pic ["$\psi$",draw,solid,-latex,angle radius=1cm,angle eccentricity=1.3] {angle = M1--M--F2};

\draw[thick,-latex] (M.center) -- ++(70:2) coordinate (V) node[left]{$v$}
pic ["$\beta$",draw,solid,-latex,angle radius=0.8cm,angle eccentricity=1.3] {angle = F2--M--V};

\draw[blue,{Latex}-{Latex}] ($(F1)!3cm!90:(F2)$) node [above right] {$y^b$} coordinate (Yb) -- ($(F1)!2cm!90:(F2)$) coordinate (Ob) -- ($(Ob)!1cm!-90:(Yb)$) node [above] {$x^b$};
\draw[blue,{Latex}-{Latex}] ($(F2)!3cm!90:(H)$) node [above right] {$y^w$} coordinate (Yw) -- ($(F2)!2cm!90:(H)$) coordinate (Ow) -- ($(Ow)!1cm!-90:(Yw)$) node [above] {$x^w$};

\end{tikzpicture}
    }
    \caption{Illustration of car model}
    \label{fig:bike_model}
\end{figure}

The state and input vectors for the bicycle model are:
\begin{align}
    \mathbf{X}=\left[x, y, \psi, \dot{x}, \dot{y}, \dot{\psi} \right]^\top, \mathbf{U}=\left[\delta, \omega\right]^\top,
    \label{eq:xu}
\end{align}
where $x,y$ are the coordinate of the vehicle in a two-dimensional plane, $\psi$ is the heading angle, $\dot{x}, \dot{y}, \dot{\psi}$ are the time derivatives of $x,y,\psi$; $\delta$ is the front wheel steering angle, and $\omega$ is the rotational speed of the wheels. The system dynamics are governed by 2D rigid body kinetics:
\small
\begin{align}
    m\ddot x &= f_{fx}^w \cos(\psi + \delta) - f_{fy}^w \sin(\psi + \delta) + f_{rx}^b \cos\psi - f_{ry}^b \sin\psi, \label{eq:xddot}
    \\
    m\ddot y &= f_{fx}^w \sin(\psi + \delta) + f_{fy}^w \cos(\psi + \delta) + f_{rx}^b \sin\psi + f_{ry}^b \cos\psi,  \label{eq:yddot}
    \\
    I_z \ddot\psi &= \left( f_{fy}^w\cos\delta + f_{fx}^w \sin\delta \right) l_f - f_{ry}^b l_r, \label{eq:psiddot}
\end{align}
\normalsize
where $m$ is the mass of the vehicle, $I_z$ is the moment of inertia of the vehicle body w.r.t. the $z$ axis,  $r_f, r_r$ are the radii of the wheels, $l_f, l_r$ are the distances of the wheels to the center of mass of the vehicle, and $f_{fx}^w, f_{fy}^w, f_{rx}^b, f_{ry}^b$ are the frictional forces. The subscripts `f', `r' refer to ``front'' and ``rear'' respectively, and the superscripts `w' and `b' refer to ``front wheel frame'' and ``body frame'' respectively\footnote{The distinction of the front wheel and body frame is made due to the existence of the front wheel steering angle}. The frictions are determined by.
\begin{align*}
    f_{fx}^w = \mu_{fx}f_{fz},   f_{fy}^w = \mu_{fy}f_{fz},  %\label{eq:ff} \\  
f_{rx}^b = \mu_{rx}f_{rz},   f_{ry}^b = \mu_{ry}f_{rz},     %\label{eq:fr}
\end{align*}
where $\mu_{fx}, \mu_{fy}, \mu_{rx}, \mu_{ry}$ are the friction coefficients, and $f_{fz}, f_{rz}$ are the normal forces. The friction coefficients are described by
\begin{align}
    \mu_{ij} = - \frac{s_{ij}}{s_{i}} D \sin\left( C \operatorname{atan}(Bs_i) \right) \left(i \in f,r,\, j \in x,y\right),
    \label{eq:mf}
\end{align}
where $B,C,D$ are parameters that vary with the tire-ground interaction properties, and $s_i, s_{ij}$ are slip ratios that can be computed from relative speeds between the wheels and the road:
\begin{align}
    & s_i = \sqrt{s_{ix}^2 + s_{iy}^2}, \quad \left(i \in \{f,r\}\right), \label{eq:def_s} \\
    & s_{fx} = \frac{v_{fx}^w - \omega r_f}{\omega r_f}, s_{fy} = \frac{v_{fy}^w}{\omega r_f}, \label{eq:def_sf}\\
    &
    s_{rx} = \frac{v_{rx}^b - \omega r_r}{\omega r_r}, s_{ry} = \frac{v_{ry}^b}{\omega r_r}, \label{eq:def_sr}
    \\
        & v=\sqrt{\dot{x}^{2}+\dot{y}^{2}},  \beta=\arctan \frac{\dot{y}}{\dot{x}}-\psi, \label{eq:beta}\\
    & v_{f x}^w=v \cos (\beta-\delta)+\dot{\psi} l_{f} \sin \delta,  v_{rx}^b = v\cos\beta, \\
    & v_{f y}^w=v \sin (\beta-\delta)+\dot{\psi} l_{f} \cos \delta,  v_{ry}^b = v\sin\beta - \dot\psi l_r.  \label{eq:vfyw}
\end{align}
Finally, the normal forces $f_{fz}, f_{rz}$ can be determined as:
\begin{align}
        f_{fz} = \frac{l_r - \mu_{rx}h}{l_f + l_r + \left(\mu_{f x} \cos \delta-\mu_{f y} \sin \delta-\mu_{r x}\right)h}mg,\\
        f_{rz} = \frac{l_f + \left( \mu_{f x} \cos \delta-\mu_{f y} \sin \delta \right)h}{l_f + l_r + \left(\mu_{f x} \cos \delta-\mu_{f y} \sin \delta-\mu_{r x}\right)h}mg.\label{eq:frz}
\end{align}

The goal of our controller is driving the vehicle to perform various maneuvers while maintaining it in the drift condition, characterized by a nontrivial sideslip angle $\beta$ (defined in~\eqref{eq:beta})~\cite{zhang2018drift}, e.g., $\beta = -\pi / 3$ during a counter-clockwise move. To illustrate, a vehicle performing a drift maneuver is sketched in Fig.~\ref{fig:drift_circle}.
In particular, we consider the following drift maneuver tasks as they are common training tasks for drift enthusiasts and can therefore serve as benchmarks of our control algorithm:
\begin{itemize}
    \item \textbf{Fixed-circle drifting}: tracking a circle path with fixed center and radius while drifting.
    \item \textbf{Moving-center drifting}: drifting in a circular manner, with a moving desired center.
    \item \textbf{Varying-interaction drifting}: drifting across ground textures with different tire-ground interaction parameters $B,C,D$ (c.f.~\eqref{eq:mf}).
\end{itemize}

\begin{figure}[!htbp]
    \centering
    \includegraphics[width=0.55\columnwidth]{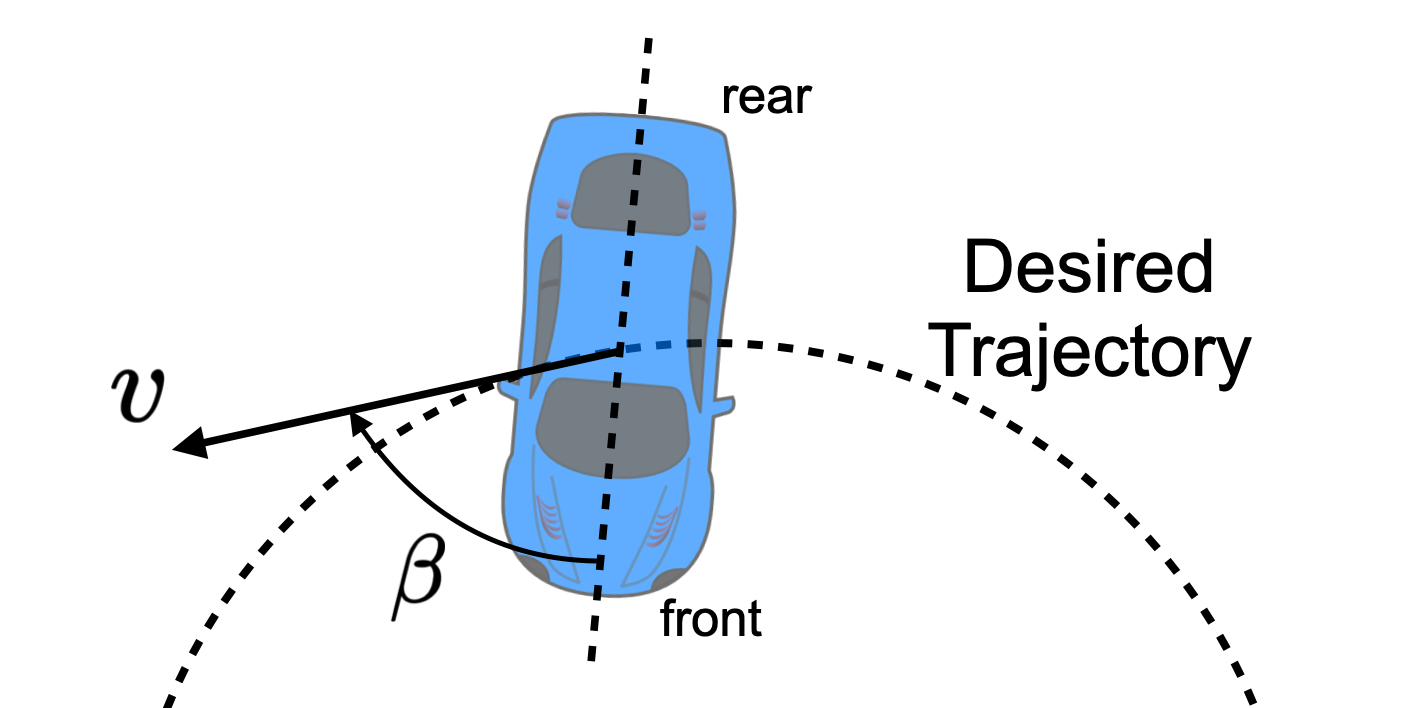}
    \caption{Illustration of vehicle performing a drift maneuver}
    \label{fig:drift_circle}
\end{figure}

\vspace{\spacelength}

\section{Control Architecture}\label{sec:method}

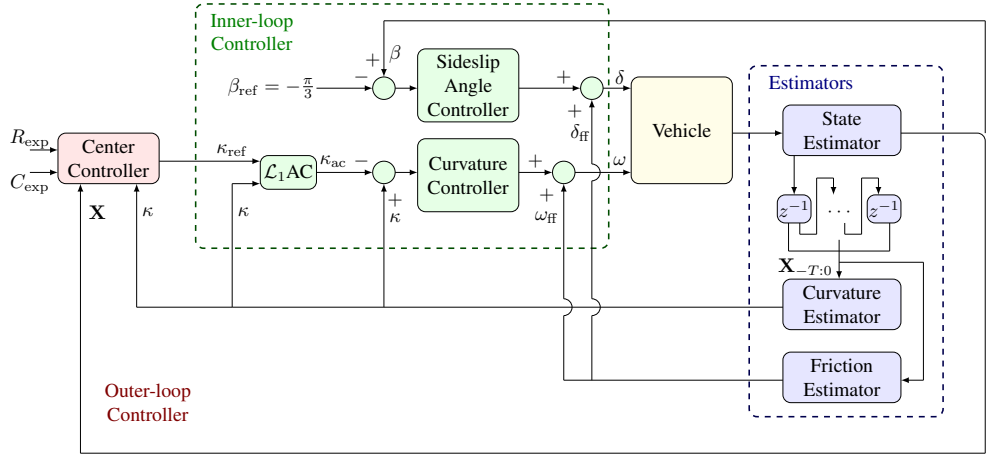
\begin{figure*}[htbp!]
	\centering
	\resizebox{0.75\textwidth}{!}{\usetikzlibrary{arrows}
\tikzset{rounded corners}
\begin{tikzpicture}

\draw [-latex, fill=red!10!white] (-4.8,1.7) rectangle node[text width=2.4cm, align=center]{Center Controller} (-3,0.8) ;
\draw [-latex, fill=green!10!white] (-1.2,1.3) rectangle node{$\mathcal{L}_1$AC} (-0.2,0.7) ;
\draw [-latex](-5.3,1.4) -- (-4.8,1.4);
\draw [-latex](-5.3,1) -- (-4.8,1);

\draw [-latex, fill=blue!10!white] (8.1,2.2) rectangle node[text width=1.6cm, align=center]{State Estimator} (10.2,1.3) ;
\draw [-latex](7.2,1.7) -- (8.1,1.7);

\draw [-latex](-3,1.2) -- (-1.2,1.2);
\draw [-latex, fill=green!10!white] (1,1) ellipse (0.2 and 0.2);
\draw [-latex](-0.2,1) -- (0.8,1);
\draw [-latex](1.2,1) -- (1.6,1);
\draw [-latex, fill=green!10!white] (1.6,1.6) rectangle node[text width=1.6cm, align=center]{Curvature Controller} (3.4,0.3);

\draw [-latex](3.4,1) -- (4,1);
\draw [-latex, fill=green!10!white] (4.2,1) ellipse (0.2 and 0.2);

\draw [-latex](4.4,1) -- (5.4,1);

\draw [-latex,fill=yellow!10!white] (5.4,2.7) rectangle node{Vehicle} (7.2,0.8);
\draw [-latex](3.4,2.5) -- (4.5,2.5);
\draw [-latex, fill=green!10!white] (1.6,1.9) rectangle node[text width=1.6cm, align=center]{Sideslip Angle Controller}(3.4,3.2);
\draw [-latex](1.2,2.5) -- (1.6,2.5);
\draw [-latex, fill=green!10!white] (1,2.5) ellipse (0.2 and 0.2);
\draw [-latex](-0.2,2.5) -- (0.8,2.5);

\draw (11.7,1.7) -- (11.7,3.7);
\draw (11.7,3.7) -- (1,3.7);
\draw [-latex](1,3.7) -- (1,2.7);
\draw [-latex](4.2,-1.25) -- (4.2,0.8);
\draw (4.2,-2.7) -- (8.1,-2.7);

\draw [-latex, fill=blue!10!white] (8.1,-3.1) rectangle node[text width=2.4cm, align=center] {Friction Estimator}(10.2,-2.2);
\draw (10.6,-0.6) -- (9.1,-0.6);

\draw [-latex](-1.7,0.8) -- (-1.2,0.8);
\draw [-latex](1,-1.4) -- (1,0.8);
\draw (-1.7,-1.4) -- (-1.7,0.8);

\draw [-latex, fill=blue!10!white] (8.1,-0.9) rectangle node[text width=1.6cm, align=center]{Curvature Estimator} (10.2,-1.8) ;

\draw (1,-1.4) -- (-1.7,-1.4);
\draw (8.1,-1.4) -- (1,-1.4);
\draw [-latex] (10.6,-2.7) -- (10.2,-2.7);

\draw [-latex, fill=green!10!white] (4.7,2.5) ellipse (0.2 and 0.2);
\draw [-latex](4.9,2.5) -- (5.4,2.5);
\draw [-latex](4.7,1.15) -- (4.7,2.3);
\node at (-1.7,1.4) {$\kappa_{\mathrm{ref}}$};
\node at (-1.5,0.3) {$\kappa$};
\node at (1.2,0.5) {$+$};
\node at (0.6,1.2) {$-$};
\node at (0.1,1.2) {$\kappa_{\mathrm{ac}}$};
\node at (-1,2.5) {$\beta_{\mathrm{ref}}= -\frac{\pi}{3}$};
\node at (0.8,3) {$+$};
\node at (0.6,2.7) {$-$};
\node at (4.2,2.7) {$+$};
\node at (3.9,0.6) {$+$};
\node at (4.4,2.1) {$+$};
\node at (3.7,1.2) {$+$};
\node at (4.5,1.7) {$\delta_{\mathrm{ff}}$};
\node at (5.2,2.7) {$\delta$};
\node at (3.9,0.2) {$\omega_{\mathrm{ff}}$};
\node at (5.2,1.2) {$\omega$};
\node at (1.2,0.2) {$\kappa$};
\node at (1.2,3.1) {$\beta$};
\node at (-5.3,1.6) {$R_{\mathrm{exp}}$};
\node at (-5.3,0.8) {$C_{\mathrm{exp}}$};

\draw (11.7,-4) -- (-4.4,-4);
\draw (11.7,-4) -- (11.7,1.7) -- (10.2,1.7);
\draw (-3.4,-1.4) -- (-1.7,-1.4);
\draw [-latex](-4.4,-4) -- (-4.4,0.8);
\draw [-latex](-3.4,-1.4) -- (-3.4,0.8);
\node at (-3.2,0.3) {$\kappa$};
\node at (-4.1,0.3) {$\mathbf{X}$};
\node at (8.5,-0.7) {$\mathbf{X}_{-T:0}$};

\draw[dashed, green!30!black, thick]  (-2.35,4) rectangle (5,-0.35);
\node[text width=2.4cm, align=center, text=green!50!black] at (-1.35,3.5) {Inner-loop Controller};
\node[text width=2.4cm, align=center, text=red!50!black] at (-3.2,-3.1) {Outer-loop Controller};
\draw[dashed, blue!30!black, thick]  (7.5,2.9) rectangle (10.95,-3.35);

\draw [-latex, fill=blue!10!white] (8,0.6) rectangle node[text width=2.6cm, align=center] {$z^{-1}$}(8.6,0.1);
\draw [-latex, fill=blue!10!white] (9.6,0.6) rectangle node[text width=2.6cm, align=center] {$z^{-1}$}(10.2,0.1);
\node at (9.1,0.3) {\small{$\cdot\cdot\cdot$}};
\draw [-latex] (8.3,1.3) -- (8.3,0.6);
\draw [-latex] (9,0.9) -- (9,0.6);
\draw  (8.4,-0.1) -- (8.4,0.1);
\draw  (8.7,-0.1) -- (8.4,-0.1);
\draw (8.7,0.9)   -- (8.7,-0.1);
\draw (8.7,0.9) -- (9,0.9);

\draw  (9.2,-0.1) -- (9.2,0.1);
\draw  (9.5,-0.1) -- (9.2,-0.1);
\draw (9.5,0.9)   -- (9.5,-0.1);
\draw (9.5,0.9) -- (9.8,0.9);
\draw [-latex] (9.8,0.9) -- (9.8,0.6);

\draw[-latex]  (9.1,-0.2) -- (9.1,-0.9);
\draw  (8.2,-0.4) -- (10,-0.4);
\draw  (10,-0.4) -- (10,0.1);
\draw  (8.2,-0.4) -- (8.2,0.1);

\draw (4.7,1.15) arc (89.0938:-89.0763:0.15);
\draw  (4.7,-1.25) -- (4.7,0.85);

\draw  (10.6,-0.6) -- (10.6,-2.7);
\draw (4.2,-1.25) arc (89.0938:-89.0763:0.15);
\draw (4.7,-1.25) arc (89.7377:-89.7286:0.15);

\draw  (4.7,-1.55) -- (4.7,-2.7);
\draw  (4.2,-1.55) -- (4.2,-2.7);
\node[text=blue!50!black] at (8.6,2.6) {Estimators};
\end{tikzpicture}}
	\caption{Overview of our control architecture}
	\label{fig:big}
\end{figure*}

% - What is the difficulty of drift maneuver control?
%   - nonlinear and difficulty in modelling? (tire: estimation)
%   - time scale? (theta , omega? response)
% - Block Diagram
%   - inner loop (track R) / outer loop (track the center) (why not mpc?)
%     - why estimate curvature 
% - Why adopt this Architecture?
% - how does this Architecture solve the problem?

In this section, we propose our control strategy for drift maneuvers, the architecture of which is presented in Fig.~\ref{fig:big}.
Our controller consists of three main components: i) state, curvature and friction estimators, which provide both low- and high-level descriptions of the current drifting situation; ii) an inner-loop feedforward/feedback controller, which aims to track target curvature $\kappa_{\text{ref}}$ while maintaining the vehicle in sustained drift; iii) an outer-loop controller for center stabilization, which decides a feasible curvature for the inner-loop controller to track, according to the expected center and radius $C_{\exp}, R_{\exp}$ provided by the task specification. A main feature of our control architecture is the estimation of and feedback on the curvature $\kappa$, a key quantity that allows us to decouple of the inner and outer loops, which are responsible for the low-level vehicle stabilization and high-level task completion respectively. In addition, besides the standard state estimator and feedback controller blocks, we adopt a friction estimator for tuning the feedforward signals according to the perceived tire-ground interaction parameters, and an $\mathcal{L}_1$ Adaptive Control~($\mathcal{L}_1$AC) module for optimizing the transient characteristics of the inner loop, both of which serve to improve the adaptive performance of our controller when subject to tire-ground interaction changes. For the rest of this section, we describe each of the three components of our control architecture in detail.

\subsection{Estimators}\label{sec:estimator}

For state estimation, we adopt the Kalman Filter~(KF) to fuse position and angular velocity measurements from both on-board sensors, e.g., Inertial Measurement Unit~(IMU), and off-board sensors, e.g., the motion capture system. Taking communication delay into account, we implement the KF in an asynchronous manner~\cite{async_kf}, performing an update upon the arrival of each new observation from any sensor.

For both curvature and friction estimation, we utilize state and input data (defined in~\eqref{eq:xu}) from a recent time window of $T$ steps, denoted by $\{\mathbf{X}^{(i)}, \mathbf{U}^{(i)}\}_{i=-T+1}^0$.

For curvature estimation, we fit a circle centered at $\left(x_0,y_0\right)$ with radius $R$ that approximately crosses those $T$ points, by solving the following nonlinear optimization problem:
\begin{align*}
    \arg \min\limits_{x_0,y_0,R} & \sum\limits_{i=-T+1}^{0}\left({R_\text{geo}^{(i)}}-{R}\right)^2+\left({R_\text{kin}^{(i)}}-R\right)^2, \\
    \text{s.t. }& R_\text{geo}^{(i)} = \sqrt{\left(x^{(i)}-x_0\right)^2+\left(y^{(i)}-y_0\right)^2}, \\
    & R_\text{kin}^{(i)} = \frac{\sqrt{\left(\dot{x}^{(i)}\right)^2+\left(\dot{y}^{(i)}\right)^2}}{\left|\Dot{\psi}^{(i)}\right|},
\end{align*}
and obtain the curvature $\kappa$ according to $\kappa = R^{-1}$. Notice that due to the minimization of the squared error, the optimal radius follows:
\begin{equation}
     R = \sum\limits_{i=-T+1}^{0}\frac{R_\text{geo}^{(i)}+R_\text{kin}^{(i)}}{2T}.
\end{equation}
In other words, our algorithm fuses geometric information with kinematics information (denoted by $R_\text{geo}$ and $R_\text{kin}$ respectively in the above formulation), which leverages the potential of both position and angular velocity sensors. On the other hand, most existing circle fitting algorithms use purely geometric information~\cite{Ali2009, Kasa1976ACF}. 

For friction estimation, we observe that it is very challenging to identify the parameters $B,C,D$ in a timely fashion. As a compromise, we simplify the friction estimation by assuming a constant friction coefficient $\mu = D \sin \left(C \operatorname{atan} \left(B s_{f}\right)\right)= D \sin \left(C \operatorname{atan} \left(B s_{r}\right)\right)$ between the wheels during a short time window (c.f.~\eqref{eq:mf}), which agrees with our empirical observations. Based on this assumption, the friction estimator attempts to find $\mu$ that best explains the observed data by solving the following optimization problem:
\begin{align*}
    \underset{\mu}{\operatorname{min}}\; \sum_{i=-T+1}^{-1} \ell\left(f\left(\mathbf{X}^{(i)}, \mathbf{U}^{(i)}; {\mu}\right), \mathbf{X}^{(i+1)}\right),
    % \label{eq:loss}
\end{align*}
where $\ell$ denotes the $\ell_2$ loss, $f$ denotes the discretized dynamics equations~\eqref{eq:xddot}-\eqref{eq:mf} with $D \sin \left(C \arctan \left(B s_{i}\right)\right)$ replace by $\mu$. Once the estimated friction coefficient $\mu$ is obtained, the corresponding optimal feedforward control inputs $\delta_{\text{ff}}, \omega_{\text{ff}}$ can be found by looking up a table of drift equilibria collected offline.

\subsection{Inner-loop Controller}

The basis of our inner-loop controller is two feedback control loops: the first control loop stabilizes the sideslip angle $\beta$ by tuning the front wheel steering angle $\delta$. The goal of this controller is to keep the vehicle in a sustained drift status, by maintaining the sideslip angle $\beta$ close to a constant, e.g., $\beta_{\mathrm{ref}} = -\pi/3$. The other feedback loop controls the curvature $\kappa$ by tuning the wheel rotational speed $\omega$ in order to track the reference signal $\kappa_{\mathrm{ref}}$ from the outer control loop. In both simulation and experiment, PID controllers are sufficient for both feedback control loops to achieve satisfactory control performance. Feedforward signals $\delta_{\text{ff}}, \omega_{\text{ff}}$, which depend on tire-ground friction coefficient $\mu$ (see subsection~\ref{sec:estimator}) and target curvature, are supplemented to the feedback controller to speed up the transient process.

To further compensate for potential changes in the tire-ground interaction, in addition to the feedforward scheme, we adopt an $\mathcal{L}_1$ adaptive control~\cite{4282999,hindman2007designing,michini2009l1} component, the structure of which is shown in the Fig.~\ref{fig:l1ac}, where $M(s)$ represents the reference model with ideal curvature response properties, $C(s)$ is a Low-Pass Filter~(LPF), which limits the bandwidth of reference curvature signal $\kappa_{\mathrm{ac}}$ provided to the feedforward/feedback controller to avoid high frequency oscillation while maintaining a fast adaptive rate, and $\Gamma$ is the adaptive gain.
%The most important advantage of $\mathcal{L}_1$ adaptive control is the decoupling of control loop and the estimation loop, so that both transient performance and robustness can be guaranteed. 			% Need to clarify

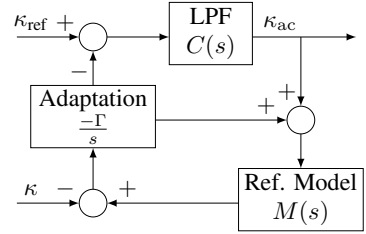
\begin{figure}[htbp!]
	\centering
	\resizebox{0.6\columnwidth}{!}{\usetikzlibrary{arrows}
\begin{tikzpicture}

\draw [-latex] (-1.1,13.8) rectangle node[text width=1cm, align=center]{LPF $C(s)$} (0.1,12.9) ;
\draw [-latex] (-2.2,13.3) ellipse (0.2 and 0.2);
\draw [-latex](-2.2,11.1) -- (-2.2,11.7);
\draw [-latex](0.1,13.3) -- (1.6,13.3);
\draw [-latex] (-0.1,11.4) rectangle node[text width=2cm, align=center]{Ref. Model $M(s)$} (1.7,10.5);
\draw [-latex] (-2.2,10.9) ellipse (0.2 and 0.2);
\draw [-latex](0.8,13.3) -- (0.8,12.3);

\draw [-latex](-2,13.3) -- (-1.1,13.3);
\draw [-latex] (0.8,12.1) ellipse (0.2 and 0.2);
\draw [-latex](-3.3,13.3) -- (-2.4,13.3);

\draw [-latex](-2.2,12.6) -- (-2.2,13.1);
\draw [-latex](-0.1,10.9) -- (-2,10.9);

\draw [-latex] (-3.1,11.7) rectangle node[text width=1.6cm, align=center] {Adaptation $\frac{-\Gamma}{s}$}(-1.3,12.6);

\draw [-latex](-3.3,10.9) -- (-2.4,10.9);
\draw [-latex](0.8,11.9) -- (0.8,11.4);

\draw [-latex](-1.3,12.1) -- (0.6,12.1);

\node at (-3.1,13.5) {$\kappa_{\text{ref}}$};

\node at (-2.6,11.1) {$-$};
\node at (-3.1,11.1) {$\kappa$};

\node at (-2.6,13.5) {$+$};

\node at (0.3,12.3) {$+$};

\node at (0.6,12.5) {$+$};
\node at (-1.7,11.1) {$+$};
\node at (-2.4,12.8) {$-$};

\node at (0.5,13.5) {$\kappa_{\mathrm{ac}}$};
\end{tikzpicture}}
	\caption{Structure of our applied $\mathcal{L}_1$ adaptive controller}
	\label{fig:l1ac}
\end{figure}

% Hindman et al.~\cite{hindman2007designing} analyze $\mathcal{L}_1$ adaptive output feedback control in detail and then give the specific expression of the closed-loop response:$y(s)=H(s)C(s)r(s)+H(s)(1-C(s))d(s)$, where $H(s)=\frac{A(s)M(s)}{C(s)A(s)+(1-C(s))M(s)}$.

By designing the reference model $M(s)$, the LPF $C(s)$ and the adaptive gain $\Gamma$, the ideal closed-loop response $H(s)C(s)$ with respect to the curvature $\kappa$ can be obtained.

Readers are referred to Michini et al.~\cite{michini2009l1} for the full technical details of the $\mathcal{L}_1$ adaptive controller design.

\subsection{Outer-loop Controller}

The goal of the outer control loop is to ensure circumnavigation around either a fixed or moving target, by controlling the curvature of vehicle. Circumnavigation using only bearing or distance information has been studied by \cite{6705614,zheng2015enclosing} and \cite{DONG2020108932} respectively. In our setup, we consider only bearing information, which is sufficient for the circumnavigation of a drifting vehicle. We assume w.l.o.g. that the desired path is counterclockwise. As such, we adopt the following control law:
\begin{align*}
	\kappa_{\mathrm{ref}}=  \kappa_{0}(1+\gamma\cos(\phi)),
\end{align*}
where $\gamma>0$ is the curvature adjustment rate, $\kappa_0=1/R_{\exp}$ is the desired curvature and $\phi$ is defined as:
\begin{equation*}
\phi \triangleq \operatorname{atan2}(\dot{y},\dot{x}) - \operatorname{atan2}(y-C_y, x-C_x), 
\end{equation*}
where $(C_x,C_y)$ is the center of the desired circle. By a change of coordinate, as is shown in Fig.~\ref{fig:move_center}, the kinematics of our vehicle can be characterized as 
\begin{align*}
	\dot{d} &= v \cos(\phi),\;\dot{\phi} = v\kappa- \frac{v}{d}\sin(\phi),
\end{align*}
where $d$ is the distance between the vehicle to the desired center, and $v = \sqrt{\dot{x}^2+\dot{y}^2}$ is the velocity.

\begin{figure}[htbp!]
	\centering
	\resizebox{0.5\columnwidth}{!}{\begin{tikzpicture}

\draw  (-0.5,-1.5) ellipse (2 and 2);
\draw[-latex] (-0.5,-1.51) -- (-1.7,0.1);

\node at (-0.9,-0.4) {$R_{\exp}$};
\node at (-0.43,-1.82) {$(C_x, C_y)$};
E

\node at (2.7,-0.5) {$(x,y)$};
\fill (2.3,-0.2) circle (1pt);
\fill (-0.5,-1.5) circle (1pt);
\draw (-0.5,-1.5) -- (2.3,-0.2) node {};
\node at (0.91,-0.66) {$d$};
\draw[-latex] (2.3,-0.2) -- (1.75,0.35);
\node at (1.64,0.13) {$v$};

\node at (2.5,0.5) {$\phi$};

\draw[-latex] (2.7,-0.01) arc (6.8802:152.0211:0.3899);

\draw[dashed] (2.3,-0.2) -- (3.2,0.217);
\end{tikzpicture}}
	\caption{Illustration of circumnavigation around an expected center}
	\label{fig:move_center}
\end{figure}
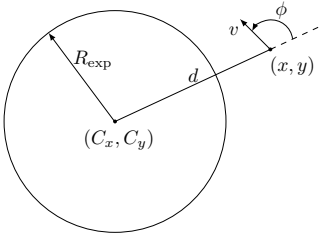

Assume that the inner control loop is perfect, then $\kappa = \kappa_{\mathrm{ref}}$. In that case, we arrive at the following differential equations:
\begin{align}
	\dot{d} &= v \cos(\phi), \label{eq:dubins} \\
	\dot{\phi} &= v\left(\kappa_0(1+\gamma\cos(\phi))- \frac{1}{d}\sin(\phi)\right).\label{eq:dubins2}
\end{align}

Further assuming that the velocity $v$ is bounded away from $0$, then it is easy to see that the only equilibrium of the above differential equation is 
\begin{align*}
	(\phi, d) = (\pi/2+2k\pi,1/\kappa_0),
\end{align*}
which correspond to the desired circular trajectory. We can further prove that the actual trajectory must converge to the desired trajectory under appropriate $\gamma$, as is formally stated in the following theorem.
\begin{theorem}
Assume that $\kappa_0 > 0, \gamma >0$ and that $v>0$ at any time, then the following equilibrium is globally asymptotically stable:
	\begin{equation*}
	\phi = \pi/2+2k\pi,\, d = 1/\kappa_0.
	\end{equation*}
\end{theorem}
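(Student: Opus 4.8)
The plan is to build a strict Lyapunov function for the closed loop \eqref{eq:dubins}--\eqref{eq:dubins2} and then invoke LaSalle's invariance principle. Since $v>0$ enters \eqref{eq:dubins} and \eqref{eq:dubins2} only as a common positive factor, I would first make the system autonomous by the time change $d\tau=v\,dt$; as $v$ is bounded away from $0$ we have $\int_0^\infty v\,dt=\infty$, so $\tau\to\infty$ when $t\to\infty$ and every asymptotic statement in $\tau$ transfers back to $t$. In the $\tau$ variable the dynamics read $dd/d\tau=\cos\phi$ and $d\phi/d\tau=\kappa_0(1+\gamma\cos\phi)-\tfrac1d\sin\phi$ on the cylinder $(d,\phi)\in(0,\infty)\times S^1$, whose unique equilibrium is $(1/\kappa_0,\pi/2)$.

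Next I would propose the candidate $V(d,\phi)=\tfrac{\kappa_0}{2}d^2-d\sin\phi$. Completing the square,
\[
V=\frac{\kappa_0}{2}\Bigl(d-\frac{\sin\phi}{\kappa_0}\Bigr)^2-\frac{\sin^2\phi}{2\kappa_0}\ge-\frac{1}{2\kappa_0},
\]
with equality iff $\sin\phi=1$ and $d=1/\kappa_0$; hence $V$ attains its global minimum exactly at the equilibrium and $V+\tfrac{1}{2\kappa_0}$ is positive definite relative to it, which already yields Lyapunov stability. Differentiating along the flow, the indefinite terms cancel and one is left with $\dot V=-\kappa_0\gamma\,d\cos^2\phi\le 0$ (equivalently $-v\kappa_0\gamma\,d\cos^2\phi$ in the original time), using $\kappa_0,\gamma,d>0$. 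Thus $V$ is nonincreasing and $\dot V=0$ holds exactly on $\{\cos\phi=0\}$.

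I would then apply LaSalle: bounded trajectories converge to the largest invariant subset of $\{\cos\phi=0\}=\{\phi=\pi/2\}\cup\{\phi=3\pi/2\}$. On $\phi=3\pi/2$ one has $\dot\phi=\kappa_0+\tfrac1d>0$, so this branch is not invariant; on $\phi=\pi/2$ invariance forces $\dot\phi=\kappa_0-\tfrac1d=0$, i.e. $d=1/\kappa_0$. The only invariant subset is therefore the equilibrium, which upgrades stability to global asymptotic stability.

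The step I expect to be the main obstacle is verifying the precompactness hypothesis of LaSalle on the \emph{open} cylinder, which is delicate because of the singularity at $d=0$. Bounding $d$ from above is immediate, since $\{V\le V_0\}$ forces $\tfrac{\kappa_0}{2}(d-\sin\phi/\kappa_0)^2\le V_0+\tfrac{1}{2\kappa_0}$. Bounding $d$ away from $0$ is the hard part. When $V_0<0$ it is automatic: $V\le V_0<0$ gives $d\sin\phi\ge|V_0|$, hence $d\ge|V_0|>0$. The remaining case $V_0\ge0$ requires showing that no trajectory accumulates on $d=0$; I would argue this with a barrier/repulsion estimate, exploiting that $V\to0$ as $d\to0$ while $\dot V<0$ off $\{\cos\phi=0\}$, and that for small $d$ the term $-\tfrac1d\sin\phi$ dominates $\dot\phi$ and rapidly rotates $\phi$ into the region $\cos\phi>0$ where $\dot d>0$, pushing the vehicle back out. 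Turning this qualitative repulsion into a rigorous lower bound on $d$, together with checking completeness of the reparametrized time, is where the real effort lies; once $V$ is in hand the remaining computations are routine.
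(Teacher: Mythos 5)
Your proposal is essentially the paper's own proof: your candidate $V=\tfrac{\kappa_0}{2}d^2-d\sin\phi$ is exactly $\kappa_0$ times the paper's Lyapunov function $\tfrac12(d-\kappa_0^{-1})^2+\kappa_0^{-1}d(1-\sin\phi)$ minus the constant $\tfrac{1}{2\kappa_0}$, your derivative $-v\kappa_0\gamma\,d\cos^2\phi$ matches theirs up to that scaling (the paper in fact drops the factor of $d$, an inconsequential typo), and the LaSalle endgame identifying $\{(\pi/2+2k\pi,\,1/\kappa_0)\}$ as the largest invariant subset of $\{\cos\phi=0\}$ is identical. The one step you flag as unresolved---precompactness of sublevel sets near the singularity $d=0$ (together with the time reparametrization for nonconstant $v$)---is a genuine hypothesis of LaSalle that the paper silently skips as well; note that for initial conditions with $V_0<0$ in your normalization (equivalently $V_0<\tfrac{1}{2\kappa_0^2}$ in the paper's) the sublevel set is automatically bounded away from $d=0$, so only the large--$V_0$ case needs your barrier argument, and on this point your write-up is more careful than the published proof rather than less.
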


\begin{proof}
	Consider the following Lyapunov function candidate,
	\begin{align*}
		V = \frac12(d - \kappa_0^{-1}) ^ 2 + \kappa_0^{-1}  d  (1 - \sin\phi),
	\end{align*}
	whose derivative is
	\begin{align*}
		\dot{V} = (d - \kappa_0^{-1} \sin\phi) \dot{d} - \kappa_0^{-1} d \cos\phi \,\dot{\phi}.
	\end{align*}
	Substituting in~\eqref{eq:dubins}, \eqref{eq:dubins2}, we have
	\begin{align*}
		\dot{V} &= v \left[ (d - \kappa_0^{-1} \sin\phi)\cos\phi - d\cos\phi(1+\gamma\cos\phi) + \right. \\ &\left.\kappa_0^{-1} \cos\phi \sin\phi \right] \\
		& = -v\gamma \cos^2\phi \leq 0.
	\end{align*}
	It can be seen that $\dot{V} = 0$ only when $\cos\phi = 0$.

	We next show that the largest invariance set in $\{(\phi, d) | \dot{V} = 0\}$ is $\{(\pi/2+2k\pi, 1/\kappa_0)\}$. Indeed, from $\cos\phi = 0$ we have $\sin\phi \in {-1, +1}$. Under the above condition, from~\eqref{eq:dubins2} we see $\dot{\phi} = 0$ if and only if $\sin \phi = 1, 1/d = \kappa_0$, i.e., $(\phi, d) = (\pi/2+2k\pi, 1/\kappa_0)$. According to LaSalle's invariance theorem~\cite[Theorem 4.4]{khalil}, the trajectory of $(\phi, d)$ converges to $(\pi/2+2k\pi, 1/\kappa_0)$, i.e., the equilibrium is globally asymptotically stable.
\end{proof}

%\subsection{State Estimation}

%\input{state-estimation.tex}

% \subsection{Curvature Estimation}

% \input{curvature-estimation.tex}

% \subsection{Center Control Design}

% \input{center-control.tex}

% \subsection{Feedback Control Design}

% \input{feedback.tex}

% \subsection{Online Data-driven Feedforward Design}

% \input{feedforward.tex}

% \subsection{$\mathcal{L}_1$ Adaptive Control}

% \input{L1AC.tex}

\vspace{\spacelength}

\section{Simulation and Experiment}\label{sec:result}

\subsection{Simulation Results}

We benchmark the performance of our proposed drift control strategy using the three drift maneuver tasks defined in Section~\ref{sec:model}. In this subsection, we report the simulation results based on a simulator of the vehicle dynamics model~\eqref{eq:xddot}-\eqref{eq:frz}.

The first task we consider is fixed-circle drifting, where the expected center is $(0,0)$ and the expected radius is $10\mathrm{m}$. The controller is expected to initialize the drift starting from zero velocity, and track the target circle while maintaining the drift afterwards. The simulation results for this task are presented in Fig.~\ref{fig:fix}. We can observe from the figure that our proposed controller is effective in both drift initialization and target tracking. In particular, unlike previous methods on drift maneuver control~\cite{zhang2018drift,Jelavic2017AutonomousDP}, which switch between an expert-aided open-loop controller for initializing the drift and a feedback controller for maintaining the drift, we use the same controller for initializing and maintaining the drift, without resort to any open-loop design.
We can observe from Fig.~\ref{fig:fix-p1} that using our unified controller, the sideslip angle stabilizes around the reference value in about (10 seconds), which corresponds to traveling through an arc of less than $360^\circ$ (the arc in green in Fig.~\ref{fig:fix-p3}). This demonstrates the effectiveness of our controller in a large portion of the state space. Furthermore, we can observe from Fig.~\ref{fig:fix-p2} that after entering a sustained drift, the curvature tracking error converges to zero, which illustrates the effectiveness of hierarchical control architecture based on curvature inference and feedback. Finally, we can observe from Fig.~\ref{fig:fix-p4} that over the entire drift initialization and maintenance process, the maximum tracking error relative to the expected radius is less than $15\%$, which proves that our proposed drift control scheme is capable of tracking a drift trajectory with adequate precision.

\begin{figure}
    \centering
    \begin{subfigure}{0.23\textwidth}
        \centering
        \includegraphics[width=\textwidth]{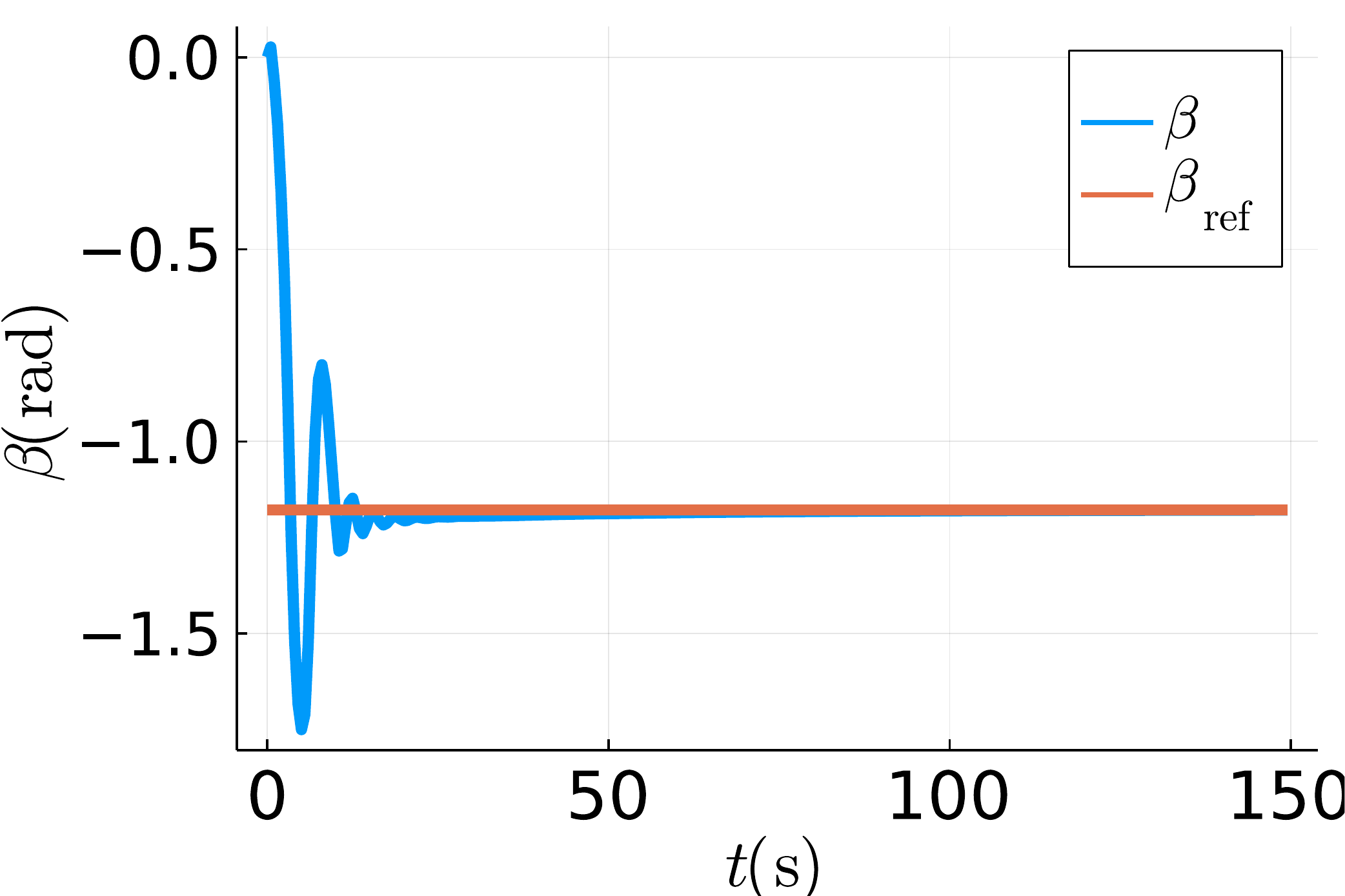}
        \caption{Tracking performance of sideslip angle $\beta$}\label{fig:fix-p1}
    \end{subfigure}
    \begin{subfigure}{0.23\textwidth}
        \centering
        \includegraphics[width=\textwidth]{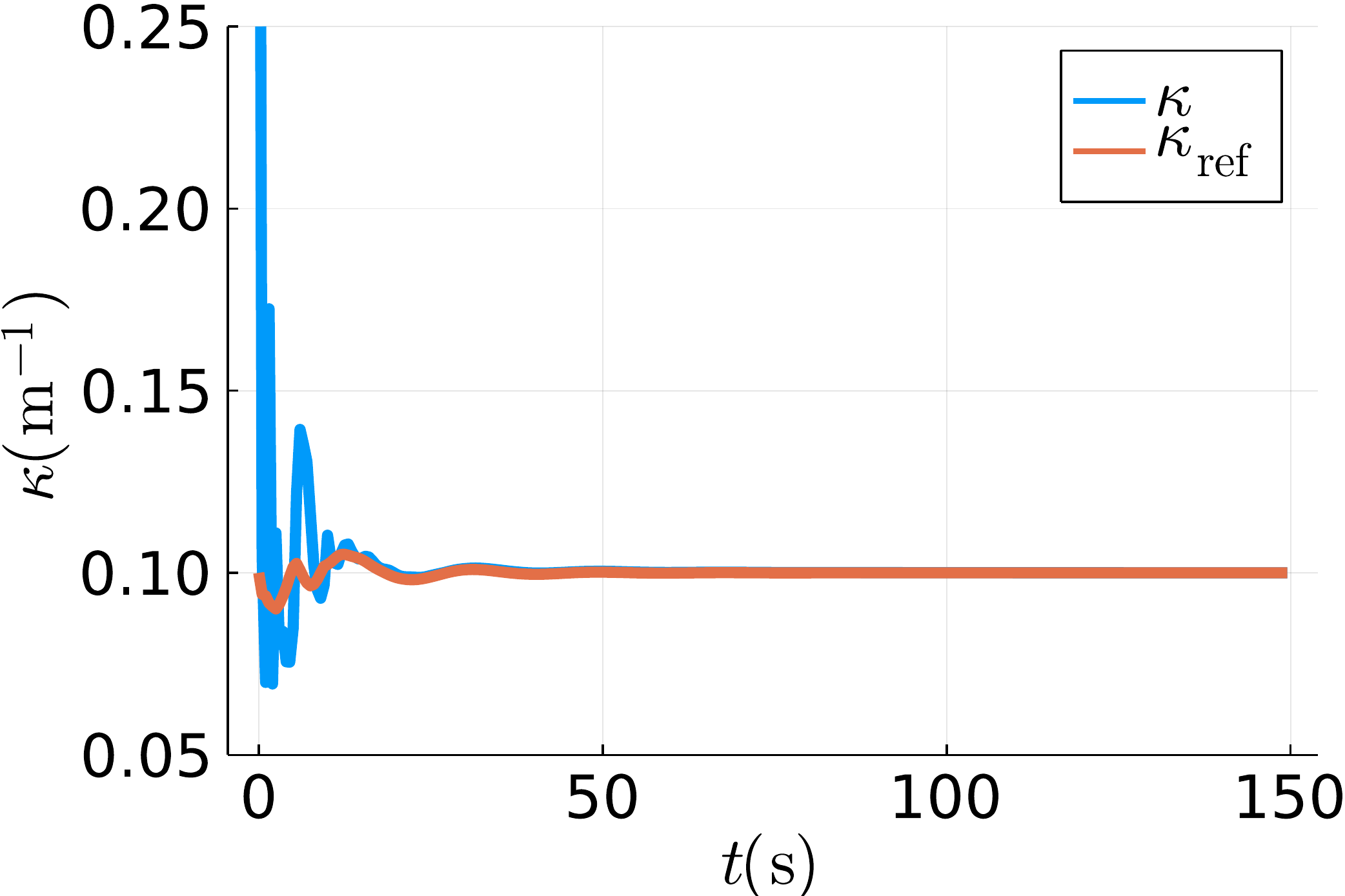}
        \caption{Tracking performance of curvature $\kappa$}\label{fig:fix-p2}
    \end{subfigure} \\
    \begin{subfigure}{0.23\textwidth}
        \centering
        \includegraphics[width=\textwidth]{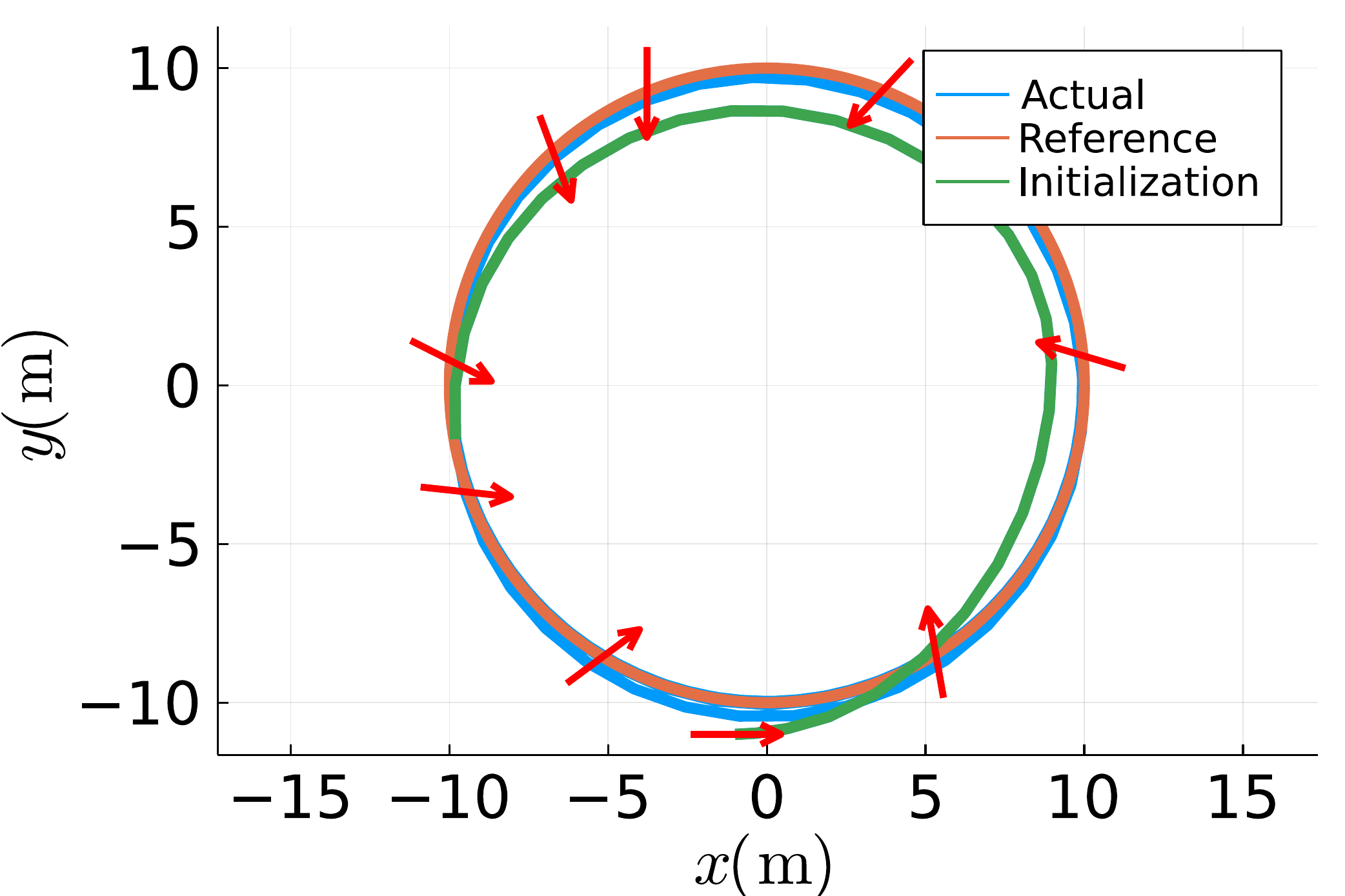}
        \caption{Reference and actual trajectories (red arrows indicate the orientation of the vehicle)}\label{fig:fix-p3}
    \end{subfigure}
    \begin{subfigure}{0.23\textwidth}
        \centering
        \includegraphics[width=\textwidth]{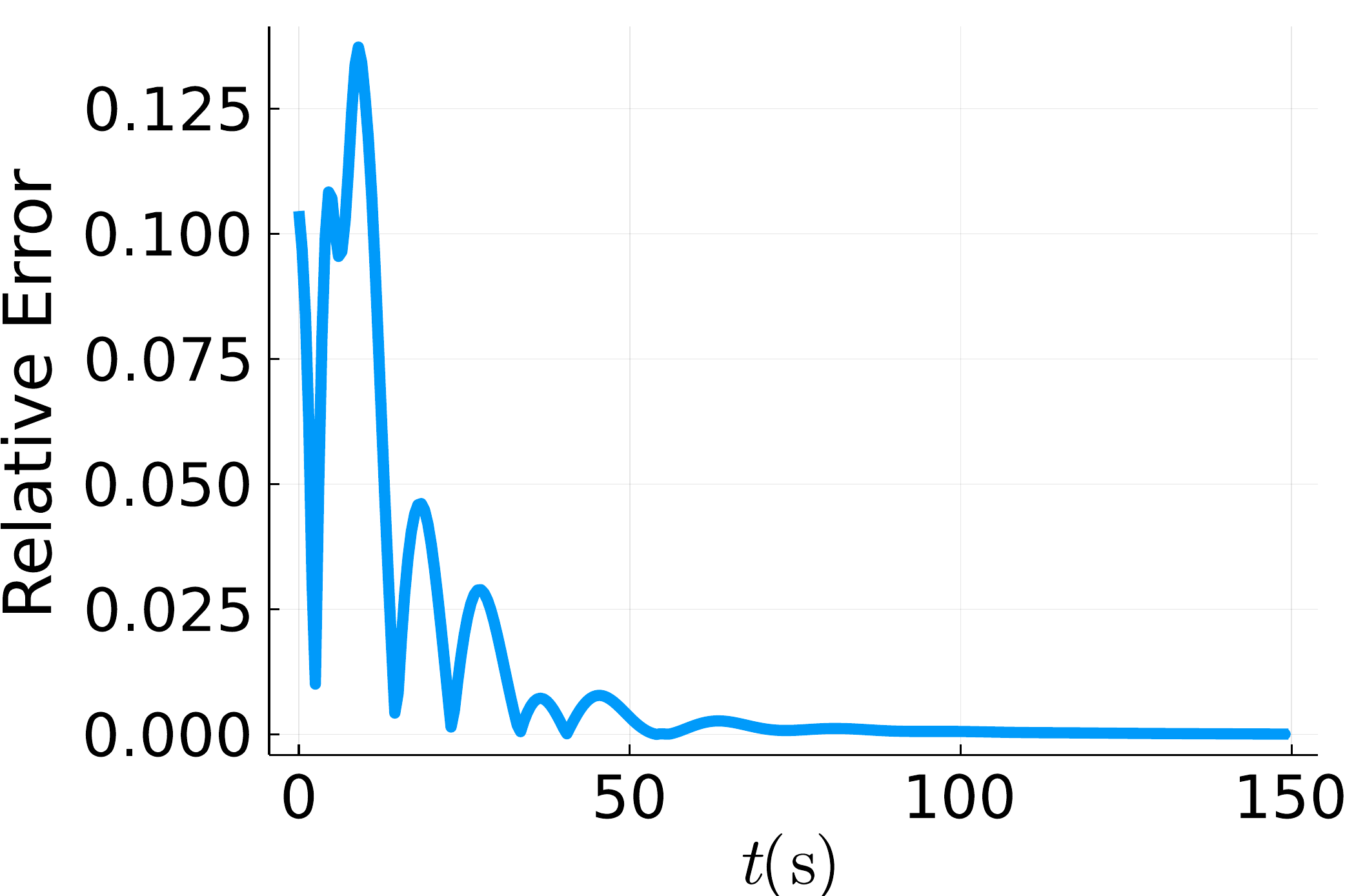}
        \caption{Relative tracking error}\label{fig:fix-p4}
    \end{subfigure}
    \caption{Simulation results for fixed-circle drifting}\label{fig:fix}
\end{figure}

The second task we consider is moving-center drifting, where the expected drifting center itself moves along a larger circle. This can be intuitively visualized as a satellite (the vehicle) orbiting a planet (the expected drifting center), which in turn orbits a star (the center of the larger circle). In this task, the continuous adjustment of drift trajectory curvature is required for the vehicle to follow the moving center smoothly.
We choose the orbit of the expected drifting center to be a circle centered at $(0, 0)$ with radius $15\mathrm{m}$, the movement speed of the drifting center to be $0.131 \mathrm{m/s}$, and the expected drifting radius to be $10 \mathrm{m}$.
The simulation results for this task are presented in Fig.~\ref{fig:move}. It can be visually checked from Fig.~\ref{fig:move-p3} that our proposed controller is able to drive the vehicle to move in the desired manner. We can observe from Fig.~\ref{fig:move-p2} that the tracking of a moving center realized by the cooperation between our outer-loop controller, which sets the reference curvature $\kappa_{\mathrm{ref}}$ to vary periodically, and our inner-loop controller, which makes the actual curvature $\kappa$ track a non-stationary reference agilely. 

\begin{figure}
    \centering
    \begin{subfigure}{0.23\textwidth}
        \centering
        \includegraphics[width=\textwidth]{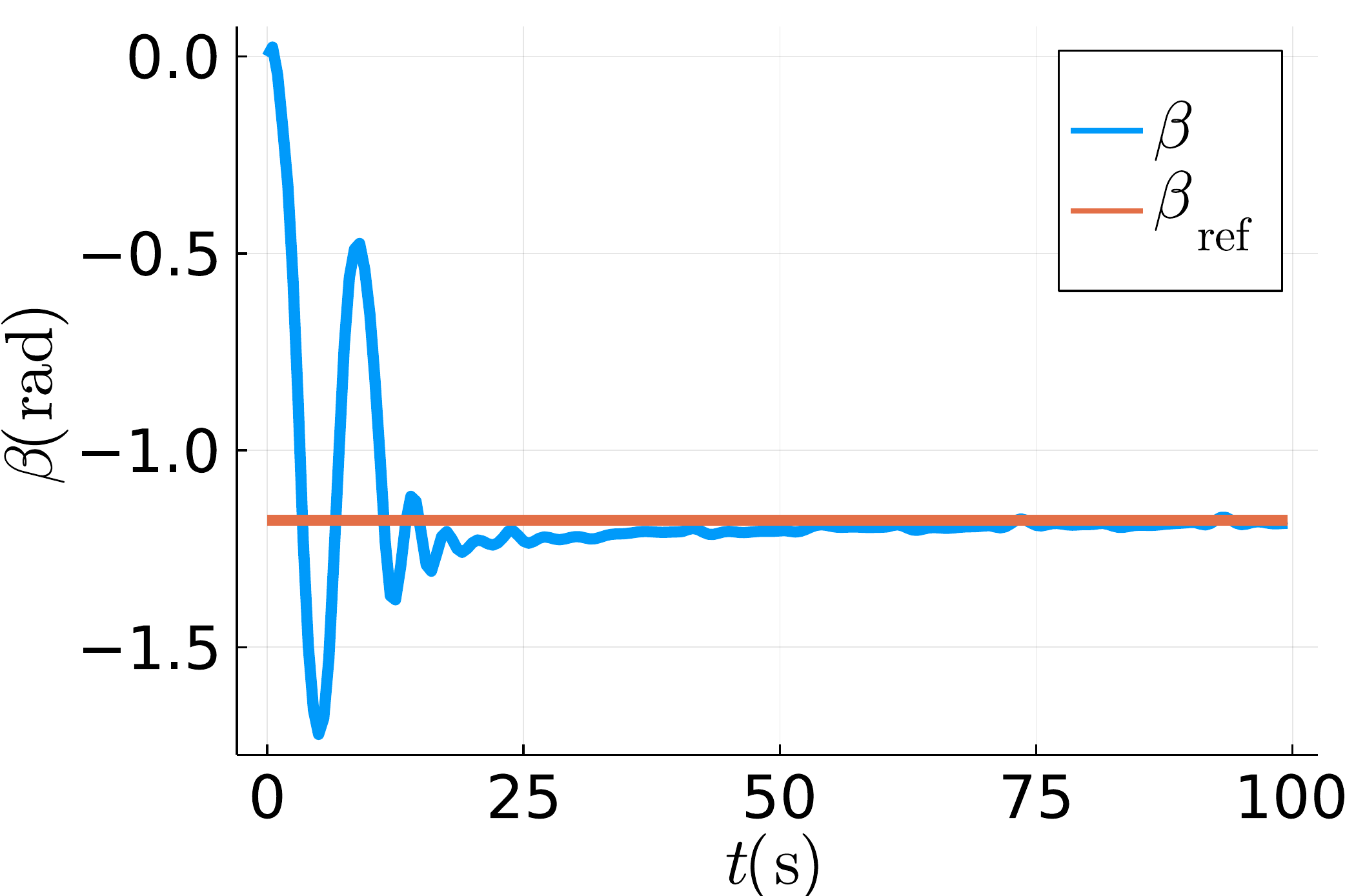}
        \caption{Tracking performance of sideslip angle $\beta$}\label{fig:move-p1}
    \end{subfigure}
    \begin{subfigure}{0.23\textwidth}
        \centering
        \includegraphics[width=\textwidth]{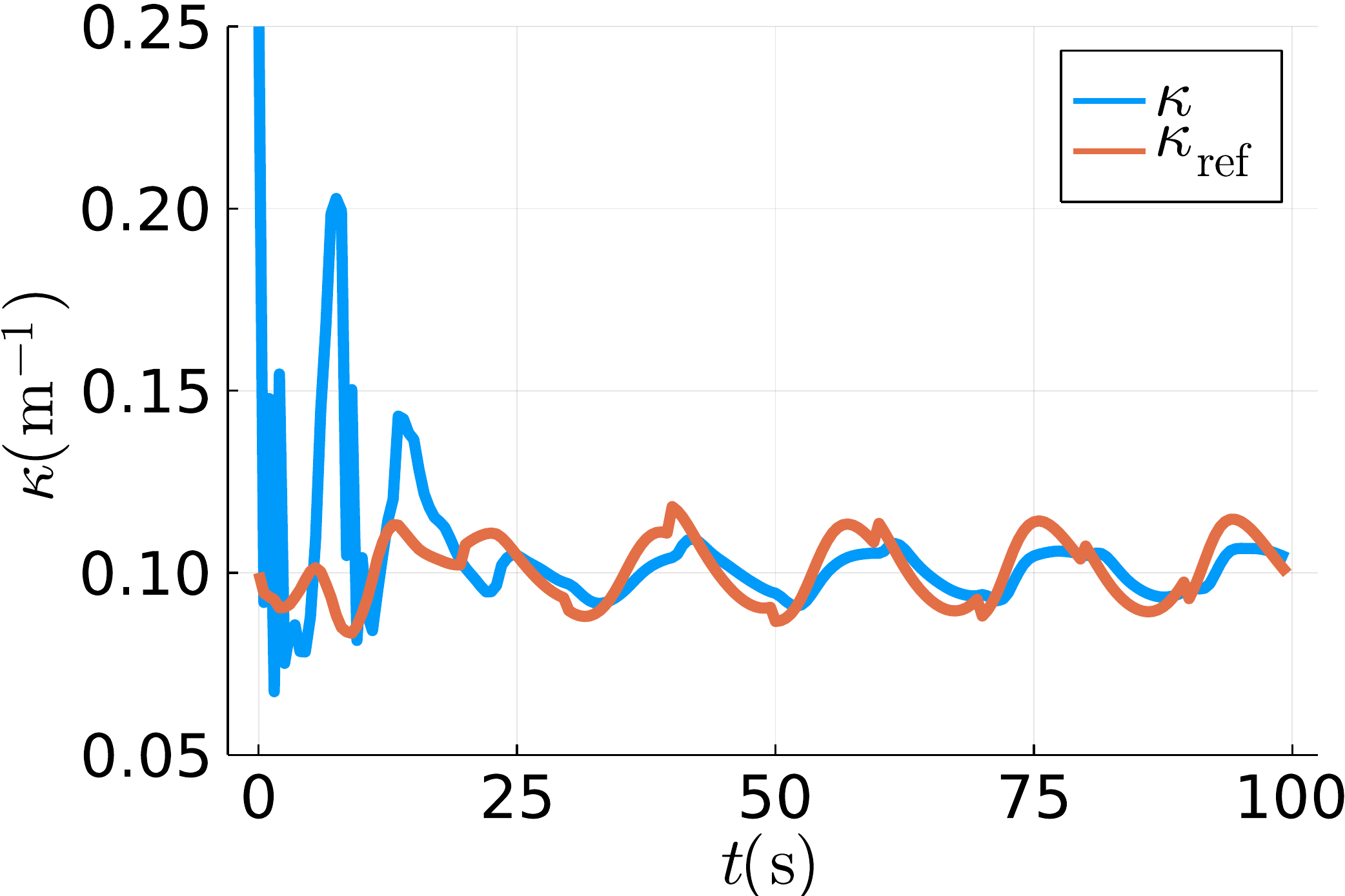}
        \caption{Tracking performance of curvature $\kappa$}\label{fig:move-p2}
    \end{subfigure} \\
    \begin{subfigure}{0.23\textwidth}
        \centering
        \includegraphics[width=\textwidth]{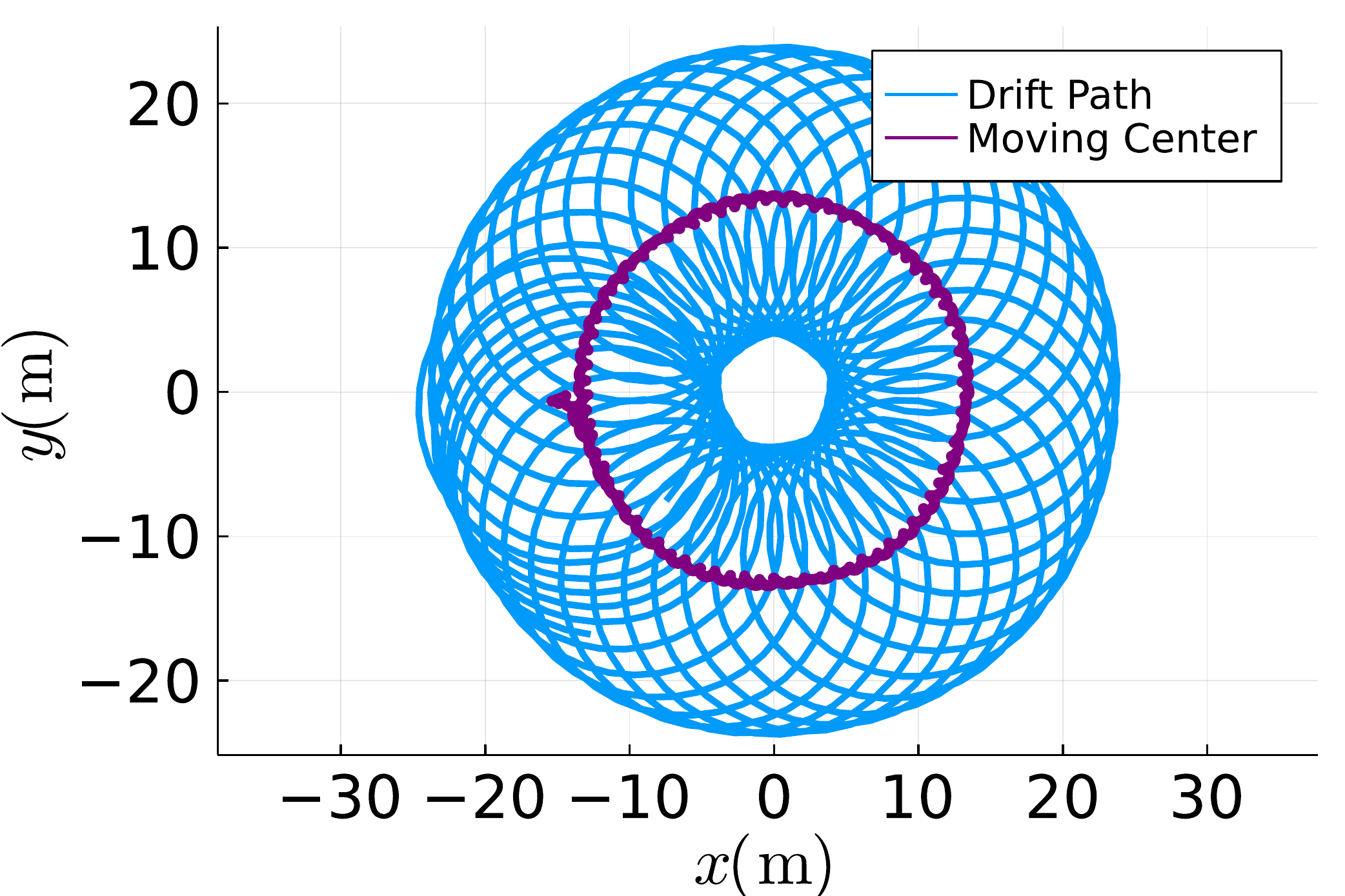}
        \caption{Moving center and drift trajectories}\label{fig:move-p3}
    \end{subfigure}
    \begin{subfigure}{0.23\textwidth}
        \centering
        \includegraphics[width=\textwidth]{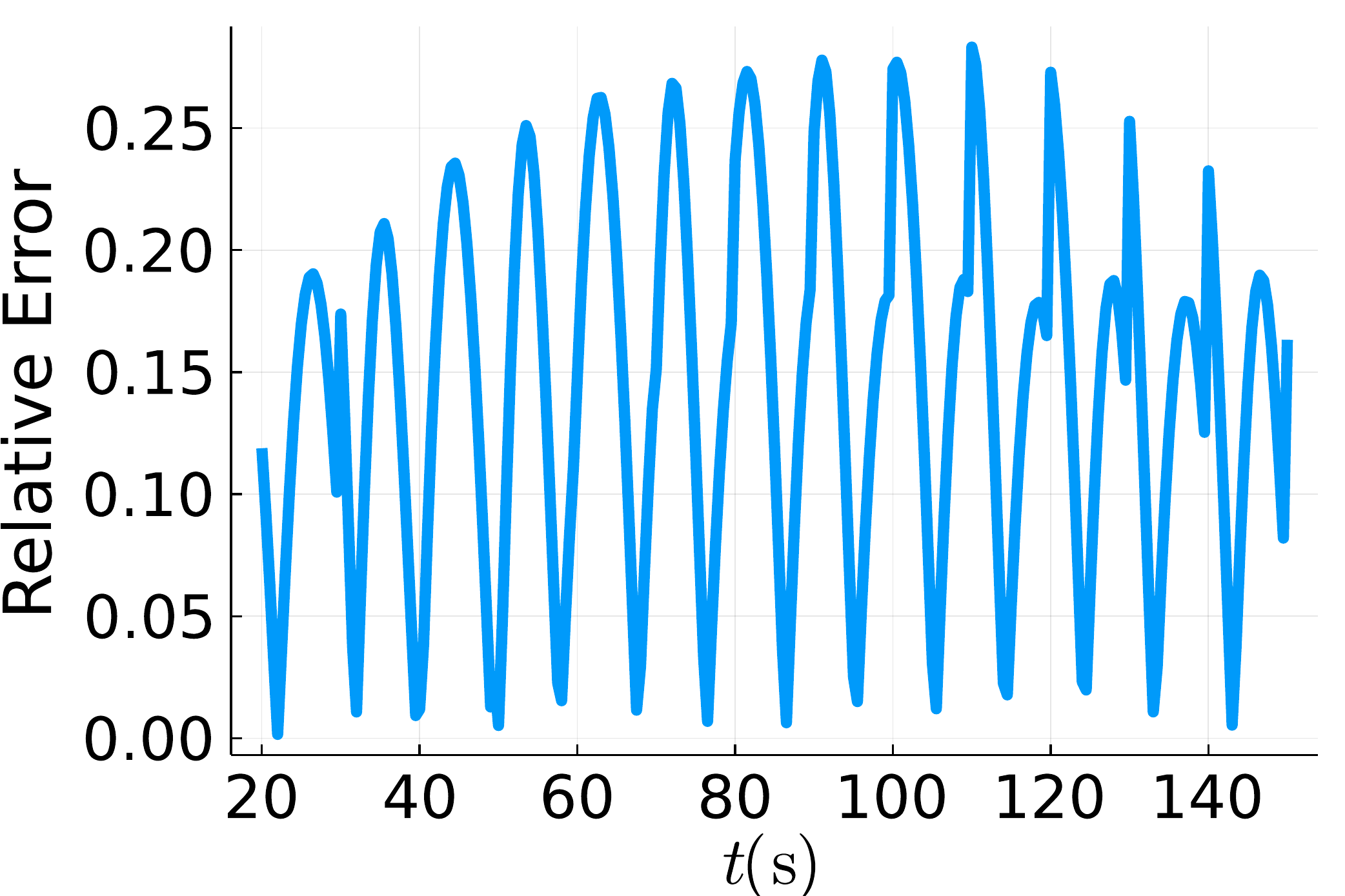}
        \caption{Relative tracking error}\label{fig:move-p4}
    \end{subfigure}
    \caption{Simulation results for moving-center drifting}\label{fig:move}
\end{figure}

The third task we consider is varying-interaction drifting, where the tire-ground interaction parameters $B,C,D$ (c.f.~\eqref{eq:mf}) undergo a sudden change during drifting. In particular, those parameters are changed from $B=5,C=2,D=0.3$ to $B=4, C=2, D=0.15$ at time $t = 200 \mathrm{s}$, which correspond to the friction coefficient $\mu$ changing from around $0.12$ to around $0.07$, which implies a loss of traction of $40\%$, posing challenge to the drift controller. The simulation results for this task are presented in Fig.~\ref{fig:vary}. We can observe from the figure that the curvature $\kappa$ and the sideslip angle $\beta$ undergo a sudden change due to the loss of friction, but our inner-loop controller quickly brings them back to the reference values. The maximum tracking error throughout the process is about $30\%$.

\begin{figure}
    \centering
    \begin{subfigure}{0.23\textwidth}
        \centering
        \includegraphics[width=\textwidth]{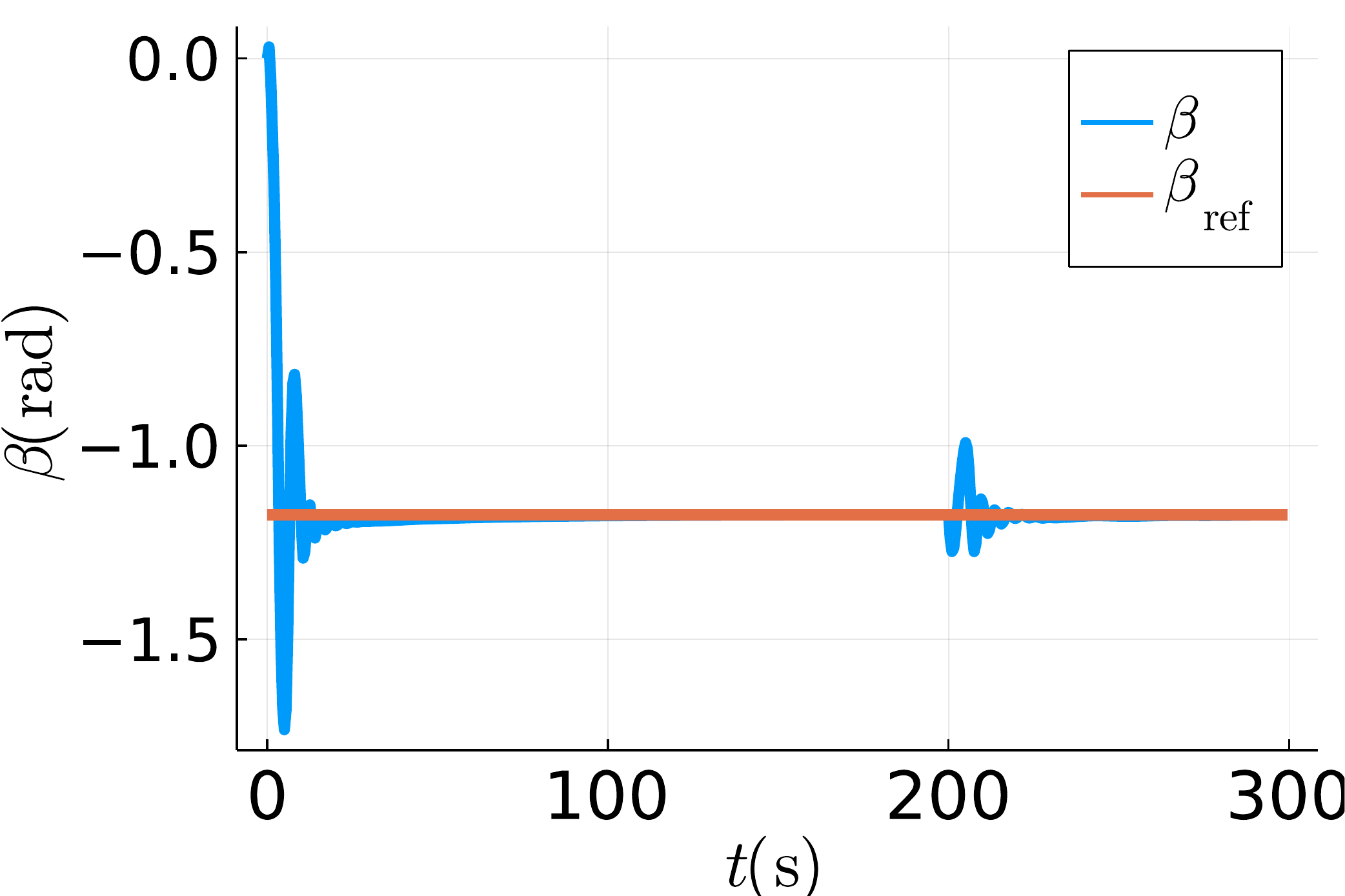}
        \caption{Tracking performance of sideslip angle $\beta$}\label{fig:vary-p1}
    \end{subfigure}
    \begin{subfigure}{0.23\textwidth}
        \centering
        \includegraphics[width=\textwidth]{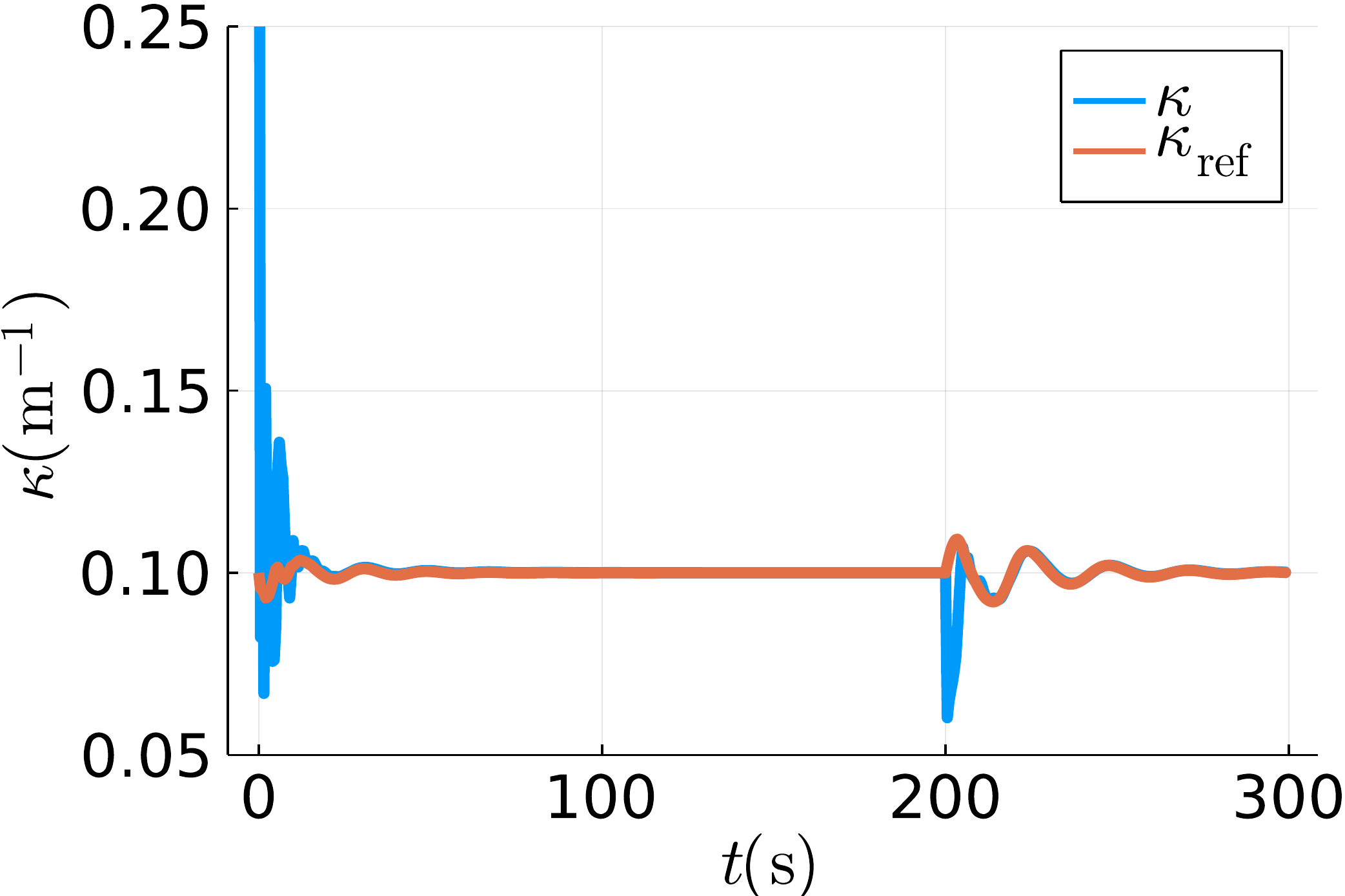}
        \caption{Tracking performance of curvature $\kappa$}\label{fig:vary-p2}
    \end{subfigure} \\
    \begin{subfigure}{0.23\textwidth}
        \centering
        \includegraphics[width=\textwidth]{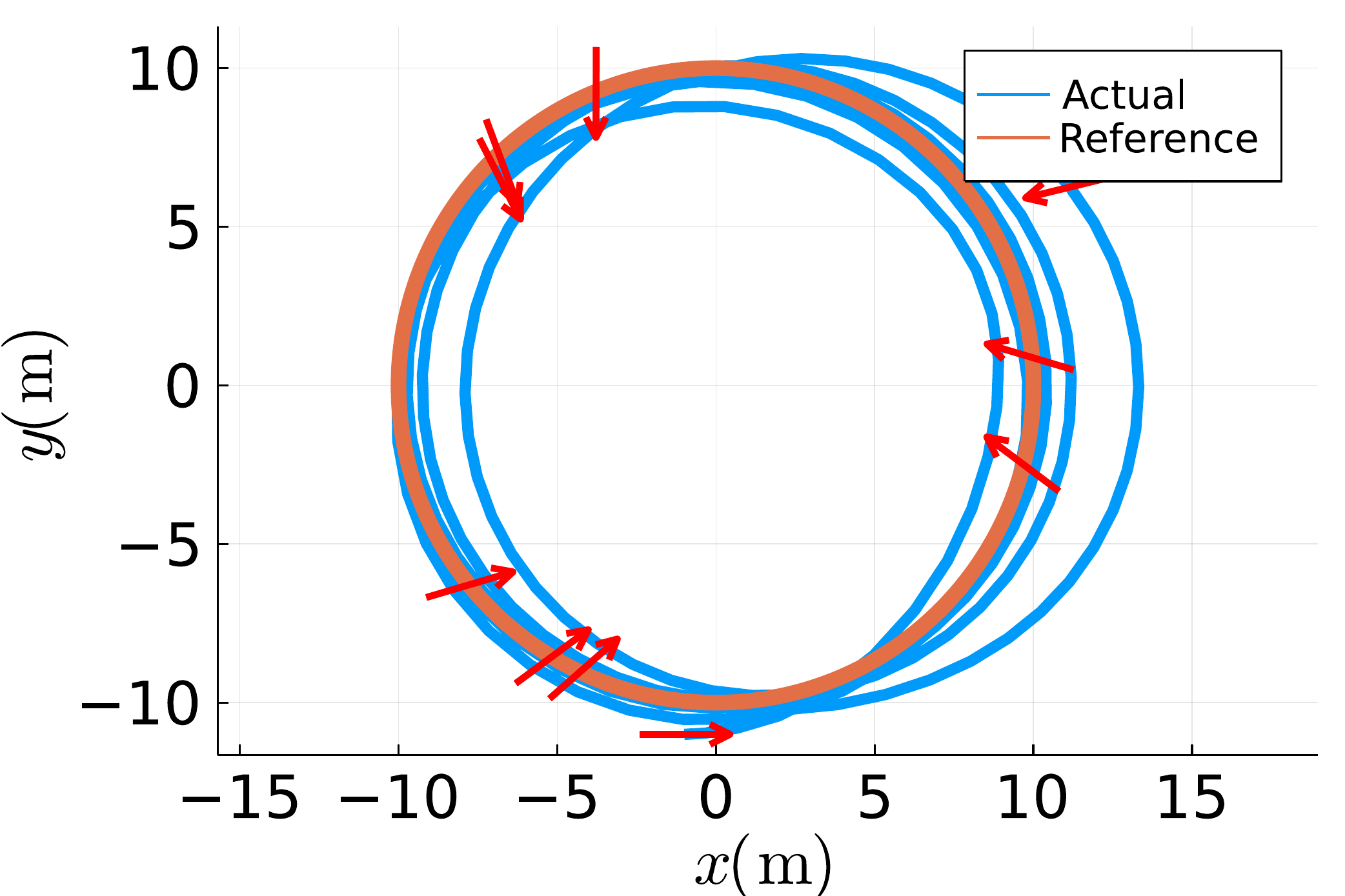}
        \caption{Reference and actual trajectories}\label{fig:vary-p3}
    \end{subfigure}
    \begin{subfigure}{0.23\textwidth}
        \centering
        \includegraphics[width=\textwidth]{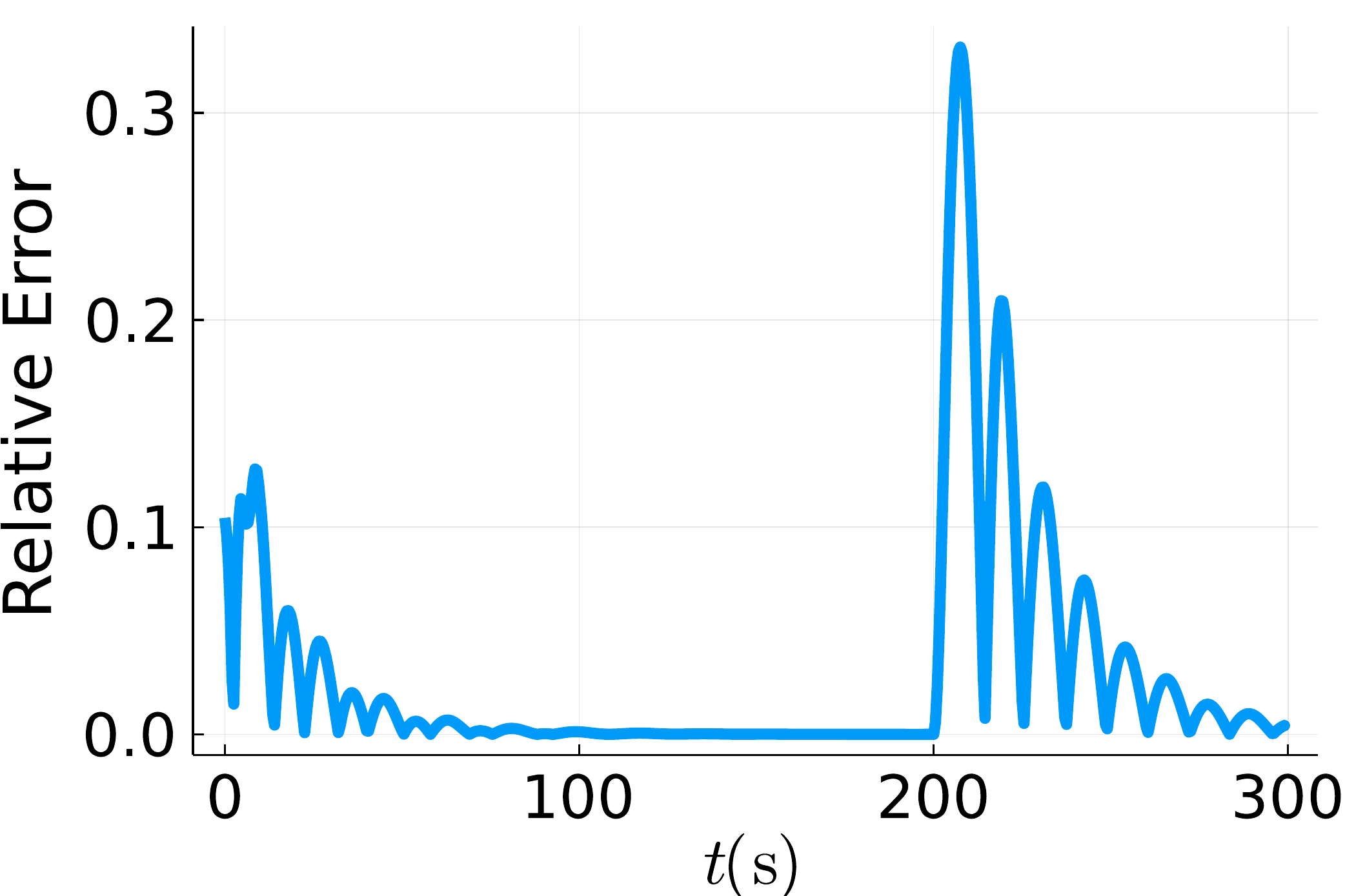}
        \caption{Relative tracking error}\label{fig:vary-p4}
    \end{subfigure}
    \caption{Simulation results for varying-interaction drifting}\label{fig:vary}
\end{figure}

\subsection{Experiment Results}

The RC car used for the hardware experiments is shown in Fig.~\ref{fig:racecar}. In order to complete the curvature estimation, we implement asynchronous Kalman filter~\cite{async_kf} to fuse the data from motion capture system and IMU. The proposed hierarchical controller is lightweight and efficient, allowing it to run on-board at a high frequency of 100Hz.
As shown in Fig.~\ref{fig:exp}, we capture 9 frames at a time interval of $1/3$ second from an experimental video and stack them together, from which we can observe that, the large sideslip angle $\beta$ is maintained while drifting.
All these drift maneuver simulations as well as full video footage of hardware experiments are available in: https://github.com/BobTesla17/hierarchical-control-framework-for-drift-maneuver.

\begin{figure}
    \centering
    \includegraphics[width=0.5\textwidth]{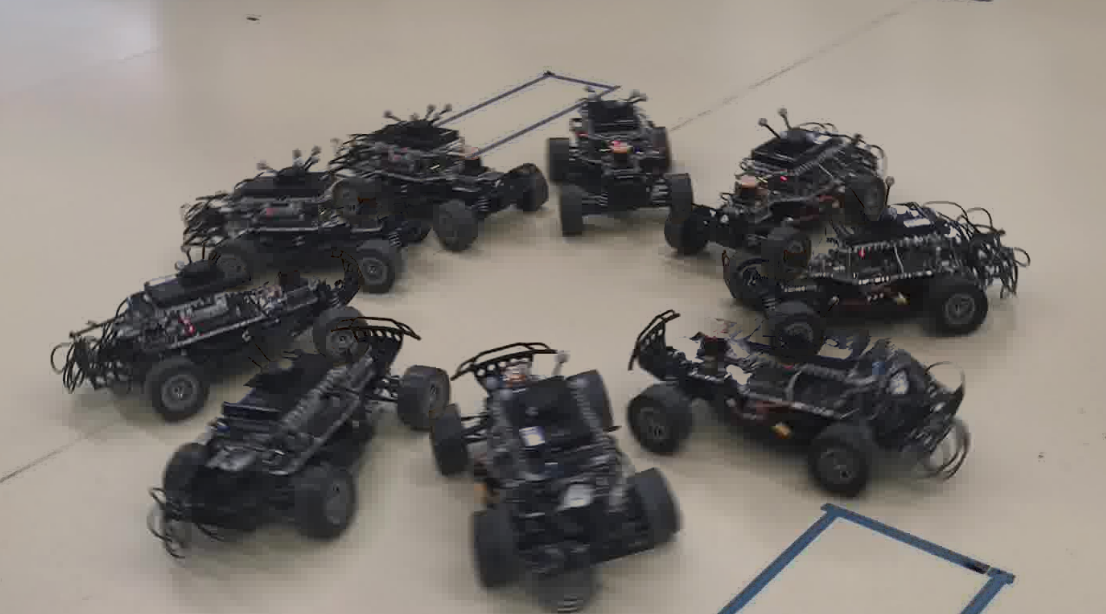}
    \caption{9 snapshots of vehicle in drifting condition stacked together}\label{fig:exp}
\end{figure}

\vspace{\spacelength}

\section{Conclusion and Future Work}\label{sec:conclusion}

In this paper, we propose a hierarchical control framework for the drift maneuvering of autonomous vehicles.
Our proposed controller features curvature inference from a small section of the historical trajectory, based on which we can decouple the curvature and center control into inner and outer control loops: the outer control loop stabilizes the center and provides a reference curvature, which is then tracked by the inner control loop.
Both numerical simulation and experimental results on a RC car platform demonstrate the effectiveness of our proposed control architecture in a set of drift maneuvering tasks.
In future works, we plan to integrate high-level curvature-based path planners into our outer-loop design, in order to achieve a larger variety of drift maneuvering tasks while avoiding obstacles in the environment.

\addtolength{\textheight}{-5cm}  % This command serves to balance the column lengths
                                  % on the last page of the document manually. It shortens
                                  % the textheight of the last page by a suitable amount.
                                  % This command does not take effect until the next page
                                  % so it should come on the page before the last. Make
                                  % sure that you do not shorten the textheight too much.

%%%%%%%%%%%%%%%%%%%%%%%%%%%%%%%%%%%%%%%%%%%%%%%%%%%%%%%%%%%%%%%%%%%%%%%%%%%%%%%%

%%%%%%%%%%%%%%%%%%%%%%%%%%%%%%%%%%%%%%%%%%%%%%%%%%%%%%%%%%%%%%%%%%%%%%%%%%%%%%%%

%%%%%%%%%%%%%%%%%%%%%%%%%%%%%%%%%%%%%%%%%%%%%%%%%%%%%%%%%%%%%%%%%%%%%%%%%%%%%%%%

%TODO: 
%0. get an arxiv number
%1. create a github repo (add ipynb in the future)
%2. simulation of outer loop control with perfect model to see if accurate tracking can be established
%3. modelling section, start from real car                                                                  % done

% \section*{ACKNOWLEDGMENT}

\bibliography{ref.bib}

\begin{thebibliography}{10}

\bibitem{paden2016survey}
B.~Paden, M.~Cap, S.~Z. Yong, D.~Yershov, and E.~Frazzoli, ``A survey of motion
  planning and control techniques for self-driving urban vehicles,'' 2016.

\bibitem{perception1}
J.~Janai, F.~G{\"u}ney, A.~Behl, A.~Geiger, {\em et~al.}, ``Computer vision for
  autonomous vehicles: Problems, datasets and state of the art,'' {\em
  Foundations and Trends{\textregistered} in Computer Graphics and Vision},
  vol.~12, no.~1--3, pp.~1--308, 2020.

\bibitem{perception2}
Y.~Wang, W.-L. Chao, D.~Garg, B.~Hariharan, M.~Campbell, and K.~Q. Weinberger,
  ``Pseudo-lidar from visual depth estimation: Bridging the gap in 3d object
  detection for autonomous driving,'' in {\em Proceedings of the IEEE/CVF
  Conference on Computer Vision and Pattern Recognition}, pp.~8445--8453, 2019.

\bibitem{perception3}
P.~Sun, H.~Kretzschmar, X.~Dotiwalla, A.~Chouard, V.~Patnaik, P.~Tsui, J.~Guo,
  Y.~Zhou, Y.~Chai, B.~Caine, {\em et~al.}, ``Scalability in perception for
  autonomous driving: Waymo open dataset,'' in {\em Proceedings of the IEEE/CVF
  Conference on Computer Vision and Pattern Recognition}, pp.~2446--2454, 2020.

\bibitem{planning1}
M.~McNaughton, C.~Urmson, J.~M. Dolan, and J.-W. Lee, ``Motion planning for
  autonomous driving with a conformal spatiotemporal lattice,'' in {\em 2011
  IEEE International Conference on Robotics and Automation}, pp.~4889--4895,
  IEEE, 2011.

\bibitem{planning2}
J.~Wei, J.~M. Snider, T.~Gu, J.~M. Dolan, and B.~Litkouhi, ``A behavioral
  planning framework for autonomous driving,'' in {\em 2014 IEEE Intelligent
  Vehicles Symposium Proceedings}, pp.~458--464, IEEE, 2014.

\bibitem{planning3}
D.~Dolgov, S.~Thrun, M.~Montemerlo, and J.~Diebel, ``Practical search
  techniques in path planning for autonomous driving,'' {\em Ann Arbor},
  vol.~1001, no.~48105, pp.~18--80, 2008.

\bibitem{control1}
M.~Elbanhawi, M.~Simic, and R.~Jazar, ``Receding horizon lateral vehicle
  control for pure pursuit path tracking,'' {\em Journal of Vibration and
  Control}, vol.~24, no.~3, pp.~619--642, 2018.

\bibitem{control2}
L.~Ni, A.~Gupta, P.~Falcone, and L.~Johannesson, ``Vehicle lateral motion
  control with performance and safety guarantees,'' {\em IFAC-PapersOnLine},
  vol.~49, no.~11, pp.~285--290, 2016.

\bibitem{control3}
L.~Zhang, Y.~Wang, and Z.~Wang, ``Robust lateral motion control for
  in-wheel-motor-drive electric vehicles with network induced delays,'' {\em
  IEEE Transactions on Vehicular Technology}, vol.~68, no.~11,
  pp.~10585--10593, 2019.

\bibitem{zhang2018drift}
F.~Zhang, J.~Gonzales, S.~E. Li, F.~Borrelli, and K.~Li, ``Drift control for
  cornering maneuver of autonomous vehicles,'' {\em Mechatronics}, vol.~54,
  pp.~167--174, 2018.

\bibitem{goh2016simultaneous}
J.~Y. Goh and J.~C. Gerdes, ``Simultaneous stabilization and tracking of basic
  automobile drifting trajectories,'' in {\em 2016 IEEE Intelligent Vehicles
  Symposium (IV)}, pp.~597--602, IEEE, 2016.

\bibitem{goh2020toward}
J.~Y. Goh, T.~Goel, and J.~Christian~Gerdes, ``Toward automated vehicle control
  beyond the stability limits: drifting along a general path,'' {\em Journal of
  Dynamic Systems, Measurement, and Control}, vol.~142, no.~2, p.~021004, 2020.

\bibitem{cutler2016autonomous}
M.~Cutler and J.~P. How, ``Autonomous drifting using simulation-aided
  reinforcement learning,'' in {\em 2016 IEEE International Conference on
  Robotics and Automation (ICRA)}, pp.~5442--5448, IEEE, 2016.

\bibitem{dulac2021challenges}
G.~Dulac-Arnold, N.~Levine, D.~J. Mankowitz, J.~Li, C.~Paduraru, S.~Gowal, and
  T.~Hester, ``Challenges of real-world reinforcement learning: definitions,
  benchmarks and analysis,'' {\em Machine Learning}, pp.~1--50, 2021.

\bibitem{Jelavic2017AutonomousDP}
E.~Jelavic, J.~Gonzales, and F.~Borrelli, ``Autonomous drift parking using a
  switched control strategy with onboard sensors,'' {\em IFAC-PapersOnLine},
  vol.~50, pp.~3714--3719, 2017.

\bibitem{ZHANG20171916}
F.~Zhang, J.~Gonzales, K.~Li, and F.~Borrelli, ``Autonomous drift cornering
  with mixed open-loop and closed-loop control,'' {\em IFAC-PapersOnLine},
  vol.~50, no.~1, pp.~1916--1922, 2017.
\newblock 20th IFAC World Congress.

\bibitem{michini2009l1}
B.~Michini and J.~How, ``L1 adaptive control for indoor autonomous vehicles:
  Design process and flight testing,'' in {\em AIAA Guidance, Navigation, and
  Control Conference}, p.~5754, 2009.

\bibitem{abdulrahim2006dynamics}
M.~Abdulrahim, ``On the dynamics of automobile drifting,'' tech. rep., SAE
  Technical Paper, 2006.

\bibitem{levin2019agile}
J.~M. Levin, A.~A. Paranjape, and M.~Nahon, ``Agile maneuvering with a small
  fixed-wing unmanned aerial vehicle,'' {\em Robotics and Autonomous Systems},
  vol.~116, pp.~148--161, 2019.

\bibitem{racecar}
S.~Karaman, A.~Anders, M.~Boulet, J.~Connor, K.~Gregson, W.~Guerra, O.~Guldner,
  M.~Mohamoud, B.~Plancher, R.~Shin, {\em et~al.}, ``Project-based,
  collaborative, algorithmic robotics for high school students: Programming
  self-driving race cars at mit,'' in {\em 2017 IEEE integrated STEM education
  conference (ISEC)}, pp.~195--203, IEEE, 2017.

\bibitem{barc}
U.~Rosolia and F.~Borrelli, ``Learning how to autonomously race a car: a
  predictive control approach,'' {\em IEEE Transactions on Control Systems
  Technology}, vol.~28, no.~6, pp.~2713--2719, 2019.

\bibitem{pacejka1997magic}
H.~Pacejka and I.~Besselink, ``Magic formula tyre model with transient
  properties,'' {\em Vehicle system dynamics}, vol.~27, no.~S1, pp.~234--249,
  1997.

\bibitem{Jeon2011a}
J.~H. Jeon, S.~Karaman, and E.~Frazzoli, ``{Anytime computation of time-optimal
  off-road vehicle maneuvers using the RRT},'' {\em Proceedings of the IEEE
  Conference on Decision and Control}, pp.~3276--3282, 2011.

\bibitem{lu2021twotimescale}
Y.~Lu, B.~Yang, and Y.~Mo, ``Two-timescale mechanism-and-data-driven control
  for aggressive driving of autonomous cars,'' 2021.

\bibitem{async_kf}
A.~Feddaoui, N.~Boizot, E.~Busvelle, and V.~Hugel, ``High-gain extended kalman
  filter for continuous-discrete systems with asynchronous measurements,'' {\em
  International Journal of Control}, vol.~93, no.~8, pp.~2001--2014, 2020.

\bibitem{Ali2009}
A.~Al-Sharadqah and N.~Chernov, ``{Error analysis for circle fitting
  algorithms},'' {\em Electronic Journal of Statistics}, vol.~3, no.~none,
  pp.~886 -- 911, 2009.

\bibitem{Kasa1976ACF}
I.~Kasa, ``A circle fitting procedure and its error analysis,'' {\em IEEE
  Transactions on Instrumentation and Measurement}, vol.~IM-25, pp.~8--14,
  1976.

\bibitem{4282999}
C.~Cao and N.~Hovakimyan, ``L1 adaptive output feedback controller to systems
  of unknown dimension,'' in {\em 2007 American Control Conference},
  pp.~1191--1196, 2007.

\bibitem{hindman2007designing}
R.~Hindman, C.~Cao, and N.~Hovakimyan, {\em Designing a High Performance,
  Stable L1 Adaptive Output Feedback Controller}.

\bibitem{6705614}
M.~Deghat, I.~Shames, B.~D.~O. Anderson, and C.~Yu, ``Localization and
  circumnavigation of a slowly moving target using bearing measurements,'' {\em
  IEEE Transactions on Automatic Control}, vol.~59, no.~8, pp.~2182--2188,
  2014.

\bibitem{zheng2015enclosing}
R.~Zheng, Y.~Liu, and D.~Sun, ``Enclosing a target by nonholonomic mobile
  robots with bearing-only measurements,'' {\em Automatica}, vol.~53,
  pp.~400--407, 2015.

\bibitem{DONG2020108932}
F.~Dong, K.~You, and S.~Song, ``Target encirclement with any smooth pattern
  using range-based measurements,'' {\em Automatica}, vol.~116, p.~108932,
  2020.

\bibitem{khalil}
H.~K. Khalil, {\em {Nonlinear systems; 3rd ed.}}
\newblock Upper Saddle River, NJ: Prentice-Hall, 2002.
\newblock The book can be consulted by contacting: PH-AID: Wallet, Lionel.

\end{thebibliography}

\end{document}